\pgfplotsset{compat=1.12}
\definecolor{crimson}{RGB}{220, 20, 60}
\definecolor{indigo}{RGB}{75, 0, 30}
\newtheorem{proposition}{Proposition}
\newtheorem{definition}{Definition}
\newcommand{\R}{\mathbb{R}}
\DeclareMathOperator*{\argmax}{argmax}
\DeclareMathOperator*{\argmin}{argmin}
\newcommand{\printfnsymbol}[1]{%
  \textsuperscript{\@fnsymbol{#1}}%
}
\title{VQ-Flows: Vector Quantized Local Normalizing Flows}
\author[1]{Sahil Sidheekh\printfnsymbol{1}}
\author[2]{Chris B. Dock\thanks{Equal contribution}}
\author[1]{Tushar Jain}
\author[2]{Radu Balan}
\author[3]{Maneesh K. Singh\thanks{Work was performed while at Verisk Analytics.}}
\affil[1]{%
    Verisk Analytics
}
\affil[2]{%
    University of Maryland, College Park
}
\affil[3]{%
    Motive Technologies, Inc.
}
\begin{document}
\maketitle
\begin{abstract}
Normalizing flows provide an elegant approach to generative modeling that allows for efficient sampling and exact density evaluation of unknown data distributions. However, current techniques have significant limitations in their expressivity when the data distribution is supported on a low-dimensional manifold or has a non-trivial topology. We introduce a novel statistical framework for learning a mixture of \textit{local} normalizing flows as ``chart maps'' over the data manifold. Our framework augments the expressivity of recent approaches while preserving the signature property of normalizing flows, that they admit exact density evaluation. We learn a suitable atlas of charts for the data manifold via a vector quantized auto-encoder (VQ-AE) and the distributions over them using a conditional flow. We validate experimentally that our probabilistic framework enables existing approaches to better model data distributions over complex manifolds.
\end{abstract}

\section{Introduction}

Generative modeling is a machine learning paradigm that aims to learn data distributions and sample from it. If the data is drawn from a random variable $x\sim p(x)$, then one way to do this is to directly model $p(x)$ via a parameterized model ($\theta$) so that $p_\theta(x)\approx p(x)$. Such a model can then be used to generate new samples, which are expected to be statistically indistinguishable from the observed samples. Moreover, generative models that learn $p(x)$ are useful for data augmentation, outlier detection, domain transfer \cite{kobyzev2020normalizing,gen-domain-adapt}, and as priors for other downstream tasks \cite{pan2020dgp, flow-inverse-problems,density-modeling-image-prior}.

Among the most successful generative models are deep latent variable models, which assume that the latent factors of variation underlying the generative process of the data follow a simple distribution, such as a Gaussian or a uniform distribution. The non-linear function transforming this latent space to the data space (or vice-versa) is parameterized as a neural network and learned using gradient descent. Depending upon their formulation, there are three broad categories of deep latent variable models - GANs \cite{GANs}, VAEs \cite{VAE}, and normalizing flows. 

In this work, we focus on normalizing flows, a class of deep latent variable models introduced in \cite{tabak2013family} that support efficient sampling, exact density estimation, and inference \cite{rezende2015variational}. A normalizing flow maps the data space to a latent space through a series of diffeomorphisms (differentiable, bijective transformations with differentiable inverses). The data is assumed to follow an analytically computable distribution in the latent space, typically a Gaussian. Since the mapping is a diffeomorphism, the density in the data space can be obtained using the change of variables formula. To generate new samples using a flow, one can sample from the latent distribution and use the inverse transformation to map them to the data space. This makes normalizing flows powerful generative models that support exact density evaluation in contrast to GANs and VAEs.

Despite the advantages of normalizing flows over other generative models, their diffeomorphic requirement poses several restrictions. Firstly, a continuous bijective transformation with continuous inverse preserves the topology of its domain. Therefore, the data space is required to be topologically equivalent to the support of the latent distribution, typically to $D$ dimensional Euclidean space since the latent distribution is assumed to be a Gaussian. However, real data distributions typically differ from Euclidean space in many topological respects, such as the number of connected components, the presence of holes, etc. A normalizing flow would thus fail to model such data distribution accurately. Note, as an aside, that other generative models like GANs also suffer from these topological issues \cite{khayatkhoei2018disconnected}. 

A particularly troubling consequence of the continuous invertibility of flow transformations is that they are dimensionality preserving. However, according to the \textit{manifold hypothesis}, high dimensional real-world data living in $\mathcal{X}\simeq \R^D$ is often supported on a $d<<D$ manifold of the embedding space. To efficiently learn such distributions using flows, one needs to design expressive transformations that can map from a $d$ dimensional latent space to a the $D$ dimensional data space without making learning intractable. Recent work using stochastically invertible tall matrices \cite{cunningham2021change} and dimension raising conformal embeddings \cite{ross2021conformal} have paved the way in designing such transformations, however in both works expressivity is limited by the fact that the dimension changing operations are restricted to be linear (in \cite{cunningham2021change}) or made up of M\"{o}bius transformations (in \cite{ross2021conformal}).
\begin{figure}[t!]
    \centering
    \begin{tabular}{lll}
    \subfloat[Real Data]{
    \includegraphics[width = 0.317\linewidth]{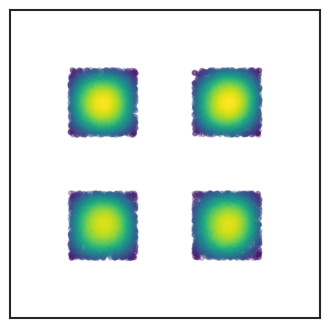}}
    \subfloat[Classic Flow]{
    \includegraphics[width =0.317\linewidth]{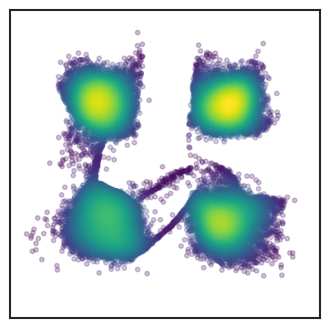}} 
    \subfloat[VQ-Flow ]{
    \includegraphics[width =0.317\linewidth]{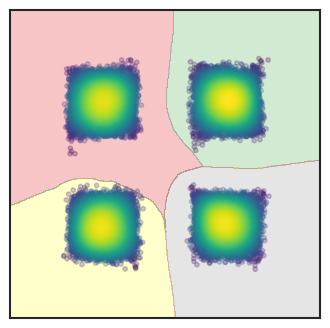}}
    \end{tabular}
    \caption{Augmentation of our framework (c) enables a classic flow (b) to better model the discontinuities in the data manifold through a learned atlas of charts(shaded region).}
\end{figure}

In this work, we propose to address the above limitations by parameterizing a family of normalizing flows to compose an atlas of charts over the data manifold. As the topology of the data manifold is expected to be ``locally'' equivalent to Euclidean space, a local normalizing flow should be able to model the local distribution over a chart region effectively. Further, by learning a mixture of flows over well-chosen charts, our approach compensates naturally for the limited expressiveness of existing flows. We summarize the main contributions of this work below:
\begin{itemize}
     \item We provide an understanding of the limited expressive power of existing flow-based models in modeling data distributions over complex topological spaces.
      \item We present a statistical framework for defining an expressive mixture of local normalizing flows that is flexible and generic enough to be used with existing approaches. We show that this framework allows for efficient sampling, inference of latent variables, and exact density evaluation while improving expressivity.
      \item We validate experimentally that the proposed approach improves flows for density estimation and sample generation, and is thus able to resolve many of the topological restrictions on expressivity imposed by using global diffeomorphisms.
\end{itemize}

\section{Background}

Given data $\{x_n\}_{n=1}^N \subset\mathcal{X}\simeq\R^D$ distributed according to an unknown distribution $p(x)$, a normalizing flow maps it through a diffeomorphism $f:\mathcal{X}\rightarrow\mathcal{Z}$ to a latent space $\mathcal{Z}\simeq \R^D$ such that $z=f(x)$ is simply distributed, for example $z\sim q(z)$ where $q=N(0,\mathbb{I})$. Recall that a diffeomorphism is a differentiable map that is bijective and whose inverse is also differentiable. Typically one denotes by $g$ the inverse of $f$ and parameterizes the normalizing flow as  $x=g_\theta(z) $,  where $\theta$ is the vector of learnable model parameters. The process of going from the latent space to data space is called {\em generation} or {\em sampling} and is accomplished by the function $g_\theta$, while the inverse procedure is termed {\em inference} and is accomplished by $f_\theta = g_{\theta}^{-1}$:
\begin{align}
    \begin{split}
    &f_\theta: \mathcal{X}\rightarrow \mathcal{Z}\\
    &\underbrace{x\mapsto f_\theta(x)}_{\mbox{Inference}}
    \end{split}
    \begin{split}
    &g_\theta: \mathcal{Z}\rightarrow \mathcal{X}\\
    &\underbrace{z\mapsto g_\theta(z)}_{\mbox{Sampling}}
    \end{split}
\end{align}
The approximation $p_\theta(x)$ to the true probability density $p(x)$ is then obtained from $q(z)$ through the change of variables formula as:
\begin{align}
    p_{\theta}(x) = q(f_\theta(x))|\det[J f_\theta(x)]|
\end{align}
As compositions of diffeomorphisms are also diffeomorphisms, one can design expressive flows by composing individual transformations that have simple to compute inverses and Jacobian determinants. Suppressing the vector of model parameters $\theta$, we will use the notation $f(x)=f^1\circ\cdots\circ f^L (x)$ where $f^1,\ldots,f^L$ are assumed to have easily computable Jacobian determinants and inverses. Define recursively $x^{l-1}=f^{l}(x^l)$, $1\leq l\leq L$, with $x^L=x$. Note that $x^l=f^{l+1}\circ\cdots\circ f^L(x)$ and $x^0=f(x)$. One can then write the log-likelihood as:
\begin{equation}
\begin{split}
    \log p(x) &= \log q(z) + \log \prod_{l=1}^{L}|\det[J f^l(x^l)]| \\ &= \log q(f(x)) +  \sum_{l=1}^{L}\log|\det[J f^l(x^l)]|
\end{split}
\end{equation}
A given layer $f^l$ of the normalizing flow will depend only on a subset $\theta_l$ of the parameters of $\theta:=(\theta_1,\ldots,\theta_L)$. Temporarily adding back in the $\theta$ dependence of $f_\theta$, maximum likelihood estimation of $\theta$ then yields the following optimization problem: 
\begin{equation}
\begin{split}
\theta^*&= \min_{\theta=(\theta_1,\ldots,\theta_L)}\frac{1}{N} \sum_{n=1}^N -\log p_{\theta}(x_n) \\ &= \min_{\theta=(\theta_1,\ldots,\theta_L)}\frac{1}{N} \sum_{n=1}^N \biggr{\{}-\log q(f_\theta(x_n))\\&-\sum_{l=1}^L\log|\det[J f_{\theta_l}^l(x_n^l)]|\biggr{\}}
\end{split}
\end{equation}
\section{Related Work}
Normalizing flows have come a long way since it was introduced in \cite{rezende2015variational,realnpv}, with much efforts focused on expanding their scalability and applicability. This has resulted in several different formulations \cite{huan2018neural-autoregresive-flows,jaini2019sspolyflow,behrmann2019iresnet,chen2018neuralode}, each with a multitude of proposed architectures \cite{maf,dinh205nice,kingma2016iaf,kingma2018glow,ho2019flow++,durkan2019nsf}, aimed at defining expressive yet analytically invertible flow transformations with efficiently computable jacobian determinants. However, as these approaches define invertible transformations in Euclidean space, they are dimensionality preserving and less suited for modeling distributions over lower dimensional manifolds \cite{dai2018diagnosing, behrman2021explodinginverse}. Subsequent works have tried to address this challenge by building injective flows \cite{brehmer2020mflow, cunningham2021change, cunningham2020normalizing, kothari2021trumpets,kumar2020regularized, caterini2021rectangular}. However, they trade off the benefits of dimensionality change to intractable density estimation or stochastic inverses. The work by \cite{ross2021conformal} overcomes the above limitations using conformal embeddings, but has limited expressive power, as we show in this work. One way to improve the expressivity of all the above approaches, and enable them to overcome topological constraints \cite{shakir2021flowreview}, is to relax their global diffeomorphic requirement by defining a \textit{mixture of flows}. Prior works in this direction have looked at infinite mixtures by defining flows in a lifted space \cite{dupont2019augmented} or by using continuous indexing \cite{cornish2020relaxing}. However, their added expressivity comes at the cost of tractable density computation, and one has to rely on variational approximations to train the model. A manifold geometric approach to normalizing flows is also taken in \cite{gemici2016normalizing} and \cite{mathieu2020riemannian}, however in contrast to this work these techniques assume the manifold and its Riemannian geometry are known. On similar lines with this work, \cite{dinh2019rad} proposes to use a finite mixture of flows through piecewise-invertible transformations over partitions of the data space by introducing both real and discrete valued latent variables in the flow. However, this formulation introduces discontinuities in the model density that leads to unstable training \cite{cornish2020relaxing}, necessitating the enforcement of boundary conditions through ad-hoc architectural changes. It is therefore limited in its generalizability to novel flow formulations. Our work, on the other hand, by decoupling the partition learning from the flow training, introduces a more generic and scalable framework that can aid existing flows to overcome topological constraints and learn complex data distributions efficiently.

\section{Methodology}

A traditional normalizing flow provides a global diffeomorphism between the latent space $\mathcal{Z}$ and the data space $\mathcal{X}\simeq\R^D$, and as such requires the latent space to have the same dimension as the data space. This can lead to numerical instability when the data is supported on a $d<D$ dimensional manifold $\mathcal{M}\subset \mathcal{X}$ because the learned transformation will tend to become ``less and less injective'' as it seeks to restrict its range to $\mathcal{M}$ \cite{dai2018diagnosing, behrman2021explodinginverse}. 

One way to overcome this challenge is to build transformations that map across dimensions while preserving invertibility on its image. Unfortunately, the natural approach of post-composing a $d$ dimensional bijective normalizing flow $g:\mathcal{Z}\rightarrow\mathcal{U}$ with a dimension-raising embedding $e:\mathcal{U}\rightarrow\mathcal{X}$ leads in general to an intractable likelihood since the determinant in the change of variables formula $p(x)=q(f(x))|Det[J_g J_e^T J_e J_g]|^{-\frac{1}{2}}$ no longer separates into a product of simpler determinants.  We will focus on the solution to this issue developed in \cite{ross2021conformal}, namely to post-compose the $d$ dimensional bijective normalizing flow $g:\mathcal{Z}\rightarrow \mathcal{U}$ with a dimension raising {\em conformal embedding} $c: \mathcal{U}\rightarrow\mathcal{X}$. An alternative solution developed in \cite{cunningham2021change} is to use a linear dimension raising embedding and invert it stochastically, but this approach relies on the dimension change operation being linear which is restrictive. The approach taken in \cite{ross2021conformal} hinges on the fact that for every $u\in\mathcal{U}$ the Jacobian $J_c(u)$ satisfies $J_c(u)^TJ_c(u)=\lambda(u)^2\mathbb{I}$ for $\lambda:\mathcal{U}\rightarrow\R$, thus
\begin{equation}
    \begin{split}
      \det[J_{c\circ g}^T J_{c\circ g}]^{\frac{1}{2}} &= \det[J_{g}^T J_c^T J_c  J_g]^{\frac{1}{2}}\\&=|\lambda(u)| \det[J_g^TJ_g]^{\frac{1}{2}} \\ &= |\lambda(u)| |\det[J_g]|
    \end{split}
\end{equation}
This splitting keeps the likelihood computation tractable, but the requirement that $\mathcal{M}$ be the range of a conformal embedding is artificially restrictive. This issue is exacerbated by the necessity of parameterizing $c$. As noted in \cite{ross2021conformal} the easiest way to do so is to let $c=c_J\circ\cdots\circ c_1$ where each $c_j$ is either a trivially conformal zero padding operation or a dimension preserving conformal transformation. A dimension preserving conformal transformation $f:\R^d\rightarrow\R^d$ with $d>2$ is restricted by Liouville's theorem to be a M\"{o}bius transformation, of the form $f(x)=(A,a,b,\alpha,\epsilon)(x) = b + \alpha(A x - a)/||A x-a||^{\epsilon}$ where $A\in O(d)$ is an orthogonal matrix, $\alpha\in\R$, $a,b\in\R^d$, and $\epsilon$ is either $0$ or $2$. Though it might initially appear that the composition of many such operations would give increased expressive power, the group structure of the M\"{o}bius transformations prevents this. Indeed, if $p_s:\R^d\rightarrow\R^{d+s}$ is the zero padding operation, $m_1=(A_1,a_1,b_1,\alpha_1,\epsilon_1)$ is a $d$ dimensional M\"{o}bius transformation and $m_2=(A_2,a_2,b_2,\alpha_2,\epsilon_2)$ is a $d+s$ dimensional M\"{o}bius transformation then it is easily verified that for $x\in\R^d$
\begin{align}
    m_2\circ p_{s}\circ m_1 (x)= (m_2\cdot \tilde{m_1}) ( p_s(x))
\end{align}
Where $\tilde{m_1}$ is the $d+s$ dimensional M\"{o}bius transformation
\begin{align}
    \tilde{m_1}=(\begin{bmatrix} A_1 & 0\\ 0 & \mathbb{I}_{s\times s}\end{bmatrix},p_s(a_1),p_s(b_1),\alpha_1,\epsilon_1)
\end{align}
Thus, this parametrization yields $c$ as a M\"{o}bius transformation of $\R^D$ composed with $p_{D-d}$. Practically speaking, if $c$ is parameterized as above, the assumption that $\mathcal{M}$ is the image of a global conformal embedding severely limits expressiveness. The class of global conformal embeddings is not subject to Liouville's theorem and is far richer than the set of M\"{o}bius transformations, but it is hard to parameterize.

\subsection{Differential geometry of conformally flat manifolds}
A weaker and more natural assumption than $\mathcal{M}$ being the image of a conformal embedding is that $\mathcal{M}$ is {\em locally conformally flat}. Recall that if $f:(\mathcal{N},\eta_1)\rightarrow(\mathcal{M},\eta_2)$ is a map between differentiable manifolds $\mathcal{N}$ and $\mathcal{M}$ with metrics $\eta_1: \mathcal{N}\times T\mathcal{N}\times T\mathcal{N}$ and $\eta_2:\mathcal{M}\times T\mathcal{M}\times T\mathcal{M}$ respectively then the pullback $f^* \eta_2$ of the metric $\eta_2$ through $f$ is defined via:
\begin{align}\begin{split}
    &f^*\eta_2: \mathcal{N}\times T\mathcal{N}\times T\mathcal{N}\rightarrow\R\\
    &f^*\eta_2(y,v,w) = \eta_2(f(y), Df(y)(v), Df(y)(w))
\end{split}\end{align}
With this in mind a $d$ dimensional manifold $\mathcal{M}$ is called {\em locally conformally flat} if $\eta_1=\sum_{i=1}^d d y_i^2$ is the flat metric and for any $x\in\mathcal{M}$ there is a neighborhood $U\ni x$, an open set $O\subset\R^d$, a diffeomorphism $f: O\rightarrow U$, and a differentiable scalar function $\lambda:O\rightarrow \R$ such that $f^*\eta_2(y,\cdot,\cdot)=\lambda(y) \eta_1(\cdot,\cdot)$ for all $y\in O$ \cite{lee2006riemannian}.  An alternate definition replaces $\R^d$ with a flat manifold (defined as having an identically vanishing Riemannian curvature tensor), but this definition is equivalent to the above since any $d$ dimensional flat manifold is locally isometric to $\R^d$ (not globally isometric, for example tori are flat when equipped with appropriate coordinates) \cite{gallot1990riemannian}.  In our case the metric $\eta_2$ is assumed to be inherited from the Euclidean metric on $\mathcal{X}\simeq \R^D$.

The notion of local conformal flatness provides far more flexibility than its global counterpart. It is well known, for example, that every $2$ dimensional Riemannian manifold is locally conformally flat, but even the sphere $S^2(\R)$ is not globally conformally flat (by contrast an explicit local conformal equivalence of $S^d(\R)$ to $\R^d$ is given by stereographic projection from the north and south poles) \cite{gallot1990riemannian}. In general, criteria are known for a Riemannian manifold of dimension $d>2$ to be locally conformally flat: For $d=3$ a pseudo-Riemanian manifold is locally conformally flat if and only if the Cotton tensor vanishes everywhere, for $d\geq 4$ a pseudo-Riemannian manifold is locally conformally flat if and only if the Weyl tensor vanishes everywhere \cite{gallot1990riemannian}.  The question of which manifolds are globally conformally flat is more difficult, and in applied problems this requirement is artificially restrictive. 

\subsection{Local Normalizing Flows}
We propose to break up the data manifold $\mathcal{M}$ into an atlas of overlapping charts $V_1,\ldots,V_K$. 
\begin{definition}[See \cite{lee2006riemannian}]
An atlas of (smooth) charts for a $d$ dimensional manifold $\mathcal{M}$ is a collection of subsets of $\mathcal{M}$, $\{V_k\}_{k=1}^K$ and a collection of subsets $\{P_k\}_{k=1}^K$ of $\R^d$ such that $\bigcup_{k=1}^K V_k =\mathcal{M}$ and a collection of invertible maps $f_k: V_k\rightarrow P_k$ such that the ``transition maps''  $f_i\circ f_j^{-1}: f_j(V_i\cap V_j)\rightarrow f_i(V_i\cap V_j)$ are smooth.
\end{definition}
We will assume charts of the form $V_j=U_j\cap\mathcal{M}$ and $P_j=f_j(V_j)$ where $U_j$ are learned open subsets of $\mathcal{X}$ such that $\{x_n\}_{n=1}^N\subset \bigcup_{k=1}^K U_k$ and $f_1,\ldots, f_K$ are conformal normalizing flows. In a slight abuse of terminology we will also refer to $U_1,\ldots, U_k$ as charts. To handle dimensionality change, we assume that the manifold $\mathcal{M}$ is locally conformally flat and of dimension $d$, implying that for $V_j$ sufficiently small there exists $D_k\subset\mathcal{U}$ and a conformal dimension raising map $c_k:\mathcal{U}\rightarrow\mathcal{X}$ so that $V_k=c_k(D_k)=c_k\circ g_k^L\circ\cdots\circ g_k^1(P_k)$.

Because chart regions may in general overlap, we propose to choose between them probabilistically. Specifically, given $U_1, \ldots, U_K$ that cover the data manifold $\mathcal{M}$, we model $p(x)$ via a latent random variable $z$ that takes values in $\mathcal{Z}$ and a ``chart picking'' discrete random variable $k$ that takes values in $\{1,\ldots,K\}$. For $k=1,\ldots,K$ let $g_k: \mathcal{Z}\rightarrow U_k$ be a global immersion (a differentiable injection whose Jacobian is everywhere full rank) with left inverse $f_k:V_k\rightarrow \mathcal{Z}$ and range $V_k=g_k(P_k)=\mathcal{M}\cap U_k$.

\begin{proposition}
Let $(U_k)_{k=1}^K$, $(V_k)_{k=1}^K$, $(g_k)_{k=1}^K$, and $(f_k)_{k=1}^K$ be as above. Further, let $k$ be a discrete random variable taking values $1,\ldots, K$ and $z$ a continuous random variable taking values in $\mathcal{Z}$. Then if $x$ is a random variable supported on $\mathcal{M}$ such that
\begin{align}
    p(x, z, k) = \delta(x- g_k(z)) q(z) p_k
\end{align}
One has
\begin{enumerate}[label=(\roman*)]
    \item The joint distribution of $x$ and $k$ is given by:
    \begin{align}
        p(x,k) = p_k \mathbbm{1}_{V_k}(x) |\det[J f_k (x) J f_k(x)^T]|^{\frac{1}{2}} q(f_k(x))
    \end{align}
    \item The marginals $p(k)$ and $p(z)$ are given by $p_k$ and $q(z)$ respectively.
    \item The random variables $z$ and $k$ are independent.
    \item The conditional distributions $p(x|k)$ and $p(k|x)$ are given by:
    \begin{align}
        \label{xgivenkprop}
        p(x|k) = \mathbbm{1}_{V_k}(x) |\det[J f_k (x) J f_k(x)^T]|^{\frac{1}{2}} q(f_k(x))\\
        \label{kgivenxprop}
        p(k|x) = \frac{p_k \mathbbm{1}_{V_k}(x) |\det[J f_k (x)J f_k (x)^T]|^{\frac{1}{2}} q(f_k(x))}{\sum_{j : x\in V_j}p_j |\det[J f_j (x) J f_j (x)^T]|^{\frac{1}{2}} q(f_j(x))}
    \end{align}
    \item The density of interest, $p(x)$ is given by
    \begin{align}\label{likelihood}
        p(x) = \sum_{k : x\in V_k} p_k |\det[J f_k (x)J f_k(x)^T]|^{\frac{1}{2}} q(f_k(x))
    \end{align}
\end{enumerate}
\end{proposition}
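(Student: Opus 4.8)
The plan is to derive everything from part (i): once the joint $p(x,k)$ is in hand, parts (ii)--(v) follow by routine marginalization and Bayes' rule. For (i) I would integrate $z$ out of the postulated joint,
\begin{equation}
p(x,k) = p_k \int_{\mathcal{Z}} \delta(x - g_k(z))\, q(z)\, dz,
\end{equation}
taking care that the Dirac mass $\delta(x-g_k(z))$ is $D$-dimensional whereas the integration is only $d$-dimensional, so the object is a priori singular and must be read distributionally. The clean way to handle this is to pair against an arbitrary test function $\psi\in C_c(\R^D)$: integrating over $x$ first collapses the delta and yields $p_k\int_{\mathcal{Z}} q(z)\,\psi(g_k(z))\,dz$, i.e.\ the pushforward $p_k\,(g_k)_\ast q$ evaluated at $\psi$.

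The core step is then to rewrite this pushforward as a density against the $d$-dimensional Hausdorff measure on $V_k$. Since $g_k$ is an injective immersion with left inverse $f_k$, the area formula for immersions gives
\begin{equation}
\int_{\mathcal{Z}} q(z)\,\psi(g_k(z))\,dz = \int_{V_k} q(f_k(x))\,\psi(x)\,\det[J g_k(f_k(x))^T J g_k(f_k(x))]^{-\frac{1}{2}}\, d\mathcal{H}^d(x),
\end{equation}
which shows that $p(x,k)$ is supported on $V_k$ (whence the indicator $\mathbbm{1}_{V_k}$) with the displayed density, modulo the identification of the Jacobian factor.

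The main obstacle, and the one genuinely geometric point, is to match $\det[J g_k^T J g_k]^{-1/2}$ with $\det[J f_k J f_k^T]^{1/2}$. I would argue via the singular value decomposition: writing $J g_k = U\Sigma W^T$ with $U$ a $D\times d$ matrix whose orthonormal columns span the tangent space $T_x V_k$, the constraint $f_k\circ g_k = \mathrm{id}$ forces the differential of $f_k$, restricted to $T_x V_k$, to invert $J g_k$ there; because $f_k$ is defined intrinsically on the manifold $V_k$, this means $J f_k$ coincides with the Moore--Penrose pseudoinverse $J g_k^{+}=W\Sigma^{-1}U^T$. Its singular values are the reciprocals $\sigma_i^{-1}$, so $\det[J f_k J f_k^T]=\prod_i \sigma_i^{-2}=\det[J g_k^T J g_k]^{-1}$ \emph{exactly}. (It is worth flagging that this identity genuinely uses the intrinsic/pseudoinverse reading of $J f_k$; an arbitrary ambient extension of $f_k$ acting nontrivially on the normal space would spoil it.) Substituting converts the area-formula Jacobian into the stated form and completes (i).

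With (i) in hand the remaining parts are short. Part (v) is immediate by summing over $k$, the indicator $\mathbbm{1}_{V_k}(x)$ restricting the sum to $\{k:x\in V_k\}$. For (ii) I would marginalize: integrating $p(x,k)$ over $V_k$ against $d\mathcal{H}^d$ reverses the area formula and returns $p_k\int_{\mathcal{Z}} q\,dz=p_k$, while integrating the original joint over $x$ gives $p(z)=\sum_k p_k\,q(z)\int\delta(x-g_k(z))\,dx=q(z)$ since $\sum_k p_k=1$ and each $D$-dimensional delta integrates to one. The same computation yields $p(z,k)=p_k q(z)=p(z)p(k)$, which is (iii). Finally (iv) is Bayes' rule: $p(x|k)=p(x,k)/p_k$ cancels the prior to give \eqref{xgivenkprop}, and $p(k|x)=p(x,k)/p(x)$ with $p(x)$ from (v) gives the normalized posterior \eqref{kgivenxprop}.
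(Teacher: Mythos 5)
Your proof is correct and follows the same skeleton as the paper's: obtain the joint $p(x,k)$ by integrating out $z$, then get (ii)--(v) by marginalization and Bayes' rule. The difference lies in how the core step (i) is executed, and your version is the more rigorous of the two. The paper manipulates the dimension-mismatched Dirac mass formally, writing $\delta(x-g_k(z)) = \mathbbm{1}_{V_k}(x)\,\delta(z-f_k(x))\,|\det[J g_k(z)^T J g_k(z)]|^{-\frac{1}{2}}$ inside the $z$-integral and then collapsing it; this composition rule for a $D$-dimensional delta under a $d$-dimensional integral is exactly what your test-function pairing plus the area formula for injective immersions justifies, and your treatment also makes explicit what the paper leaves implicit, namely that $p(x,k)$ is a density with respect to $d$-dimensional Hausdorff measure on $V_k$ rather than Lebesgue measure on $\R^D$. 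More substantively, the final equality in the paper's chain, $|\det[J g_k^T J g_k]|^{-\frac{1}{2}} = |\det[J f_k J f_k^T]|^{\frac{1}{2}}$, is asserted there without argument; your SVD/pseudoinverse computation supplies the missing proof, and your caveat that the identity requires the intrinsic (pseudoinverse) reading of $J f_k$ --- an arbitrary ambient extension of $f_k$ would add a positive semidefinite term to $J f_k J f_k^T$ coming from the normal space and inflate the determinant --- is a genuine subtlety the paper glosses over. The paper's formal delta calculus buys brevity; your route buys an argument that actually holds in the singular $d<D$ regime, which is precisely the regime the proposition is designed for.
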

\begin{proof}
Deferred to the appendix (see Section \ref{appendix}).
\end{proof}

Thus we assume the joint distribution of $x$, $z$, and $k$ to be $p(x,z,k) = \delta(x-g_k(z))q(z)p_k$ and apply the above proposition. We will use either $q=N(0,\mathbb{I})$ or $q=\frac{1}{vol(B_1(0))}\mathbbm{1}_{B_1(0)}$ as our latent distribution and let $p_k$ be the normalized probability with which $x$ occurs in $U_k$, that is:
\begin{align}\begin{split}
    p_k &:= \frac{p(x\in U_k)}{\sum_{j=1}^K p(x\in U_j)}
    =\frac{\int_{U_k} p(x) dx}{\sum_{j=1}^K \int_{U_j} p(x)dx}
\end{split}\end{align}

\begin{figure}[t]
    \centering
    \includegraphics[width=0.99\linewidth]{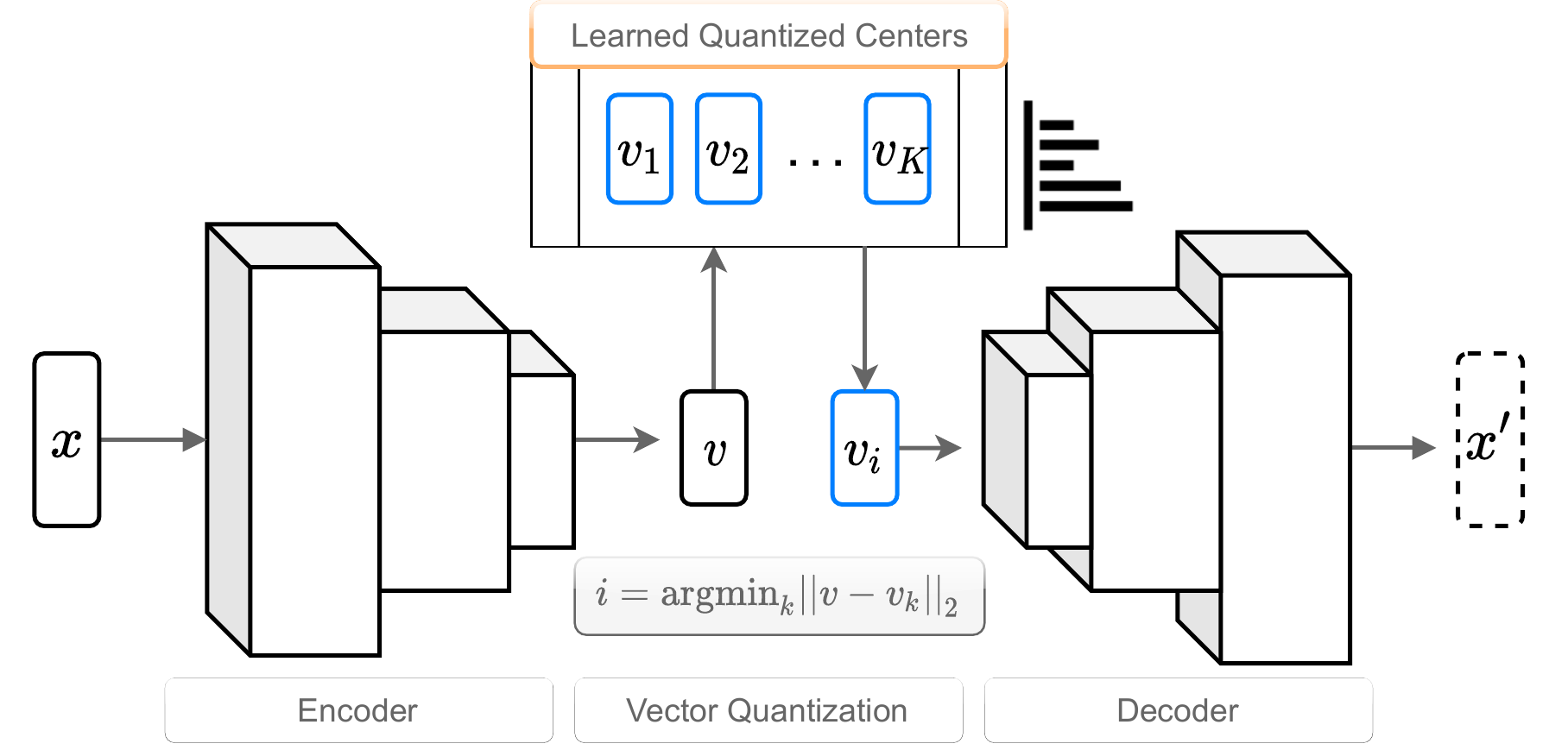}
    \caption{Learning quantized centers on the low dimensional data manifold using a vector quantized auto-encoder.}
\end{figure}

It remains to learn a ``good'' collection of charts $U_1,\ldots, U_K$, estimate $p_1,\ldots,p_K$, and then to parameterize $g_1,\ldots,g_K$ via normalizing flows $g_1^\theta,\ldots,g_K^\theta$ and obtain a maximum likelihood estimate for $\theta$ by optimizing $-\log p_\theta(x)$ (where $p_\theta(x)$ is as in \eqref{likelihood}).

\subsubsection{Learning the collection of charts $\mathbf{U_1,\ldots,U_K}$}
% \textbf{1.}
We learn the charts $U_1,\ldots,U_K$ via a vector-quantized auto encoder (VQ-AE)\cite{vqvae}, as it provides an effective and scalable mechanism to learn quantized centers on lower dimensional manifolds (also see \cite{mittal2022autosdf} for a recent application on high-dimensional data). The VQ-AE learns an encoder map $E: \mathcal{X}\rightarrow \mathcal{V}$, a decoder map $D: \mathcal{V}\rightarrow\mathcal{X}$, and a collection of ``encoded chart centers'' $Q=\{v_k\}_{k=1}^K\subset \mathcal{V}$ that minimize the reconstruction error $\mathcal{L}(D(\argmin_{v\in Q}||v-E(x)||_2), x)$. Once $D$, $E$, and $Q$ are learned we compute $d_k(x)=||E(x)-v_k||_2$ for $k=1,\ldots K$. With $d_1(x),\ldots d_k(x)$ in hand it remains to compute our charts. We would like the charts to overlap, but we also want them to be sparse in the sense that no individual $x$ has too many relevant charts. One possible choice is to fix $m\in\{1,\ldots,K\}$ and let $\tilde{d}_1\leq \cdots \leq \tilde{d}_K$ be the sorted permutation of $d_1,\ldots, d_K$ then define $U_k = \{x : ||E(x)-v_k||_2\leq\tilde{d}_m(x)\}$, so that every point $x$ has at least $m$ charts associated to it (those whose encoded chart centers are among the $m$ closest to $E(x)$). With this choice, a point $x$ will have exactly $m$ associated charts so long as the $m^{th}$ closest chart center is unique.
%$E(x)$ is not equidistant from multiple chart centers. 
Another choice would be to fix $\epsilon > 0$ and let $U_k=\{ x : ||E(X)-v_k||_2 < (1+\epsilon) \tilde{d}_m(x)$\} (increasing $\epsilon$ enlarges each chart). For now we leave $m$ and $\epsilon$ as hyper-parameters, and in general denote $m(x)=|\{k : x\in U_k\}|$ (one always has $m(x)\geq m$). Note that checking if $x\in U_k$ amounts to computing $E(x)$ and $\tilde{d}_1(x),\ldots, \tilde{d}_K(x)$ and verifying that $||E(x)-v_k||_2 < (1+\epsilon)\tilde{d}_m(x)$. 

\subsubsection{Estimating $\mathbf{p_1,\ldots,p_K}$}
% \textbf{2.}
Once $U_1,\ldots,U_K$ are fixed note that if $r_k := p(x\in U_k)$,
\begin{align}
    r_k = \mathbb{E}_{x\sim p(x)}[\mathbbm{1}_{U_k}(x)]
\end{align}
The density $p(x)$ is unknown at this point, but we may estimate $r_k$ using the empirical distribution $\rho(x)=\frac{1}{N}\sum_{n=1}^N \delta(x-x_n)$ so that $r_k\approx\mathbb{E}_{x\sim \rho(x)}[\mathbbm{1}_{U_k}(x)]$. Practically speaking we thus perform a second pass over the training data and update $r_1,\ldots,r_K$ (initialized as zero) via $r_k^{(n)} = \frac{n-1}{n}r_k^{(n-1)} + \frac{1}{n}\mathbbm{1}_{U_k}(x_n)$, $1\leq n\leq N$, finally setting $r_k = r_k^{(N)}$ and $p_k = r_k / \sum_{j=1}^K r_j$.

\begin{figure}[t]
    \centering
    \includegraphics[width=0.99\linewidth]{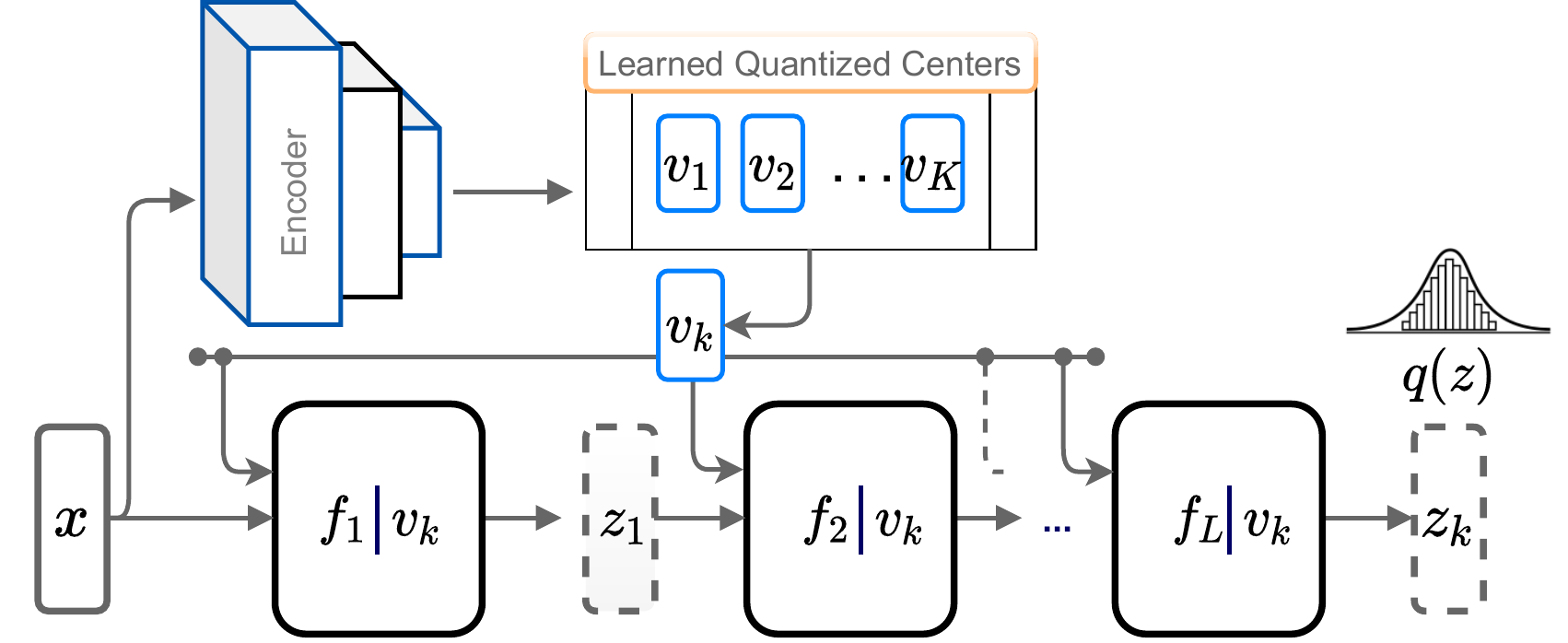}
    \caption{Learning the data distribution using a family of normalizing flows conditioned on the quantized centers.}
\end{figure}

\subsubsection{Learning the local transformations $\mathbf{g_1,\ldots,g_K}$}
% \textbf{3.}
Once $U_1,\ldots,U_K$ and $p_1,\ldots, p_K$ are obtained we model $g_k: \mathcal{Z}\rightarrow U_k$ as an $L$ layered invertible conditional normalizing flow. Where dimensionality change is required, we post-compose it with a conformal dimension raising map so that $g_k = c_k\circ g_k^L\circ\cdots\circ g_k^1$. We write the left inverse of $g_k$ via $f_k = f_k^1\circ\cdots\circ f_k^L\circ c_k^\dagger$ where $f_k^l= (g_k^l)^{-1}$ and $c_k^\dagger$ denotes the left inverse of the conformal map $c$ obtained by removing the zero padding and inverting the various M\"{o}bius transformations composing $c_k$. In practice, we reduce the number of parameters of our model by restricting each $g_k^l$ (and $f_k^l$) to depend on $k$ only through the value of the encoded chart center $v_k$. With this parametrization of $f_1,\ldots,f_K$ in hand \eqref{xgivenkprop} becomes
\begin{align}\begin{split}
    p(x|k)&=\mathbbm{1}_{V_k}(x)q(f_k(x))|\lambda_k(c_k^\dagger(x))|^{-1}\\&\prod_{l=1}^L|\det[J f_k^l(f_k^{l+1}\circ\cdots\circ f_k^L(x))]| 
\end{split}\end{align}
where $\lambda_k(u)$ is defined via $(J c_k (u))^T(J c_k (u)) = \lambda_k(u)^2 \mathbb{I}$. 

As we'll see this approach allows for far higher expressive power than global conformal flows without sacrificing the ability to generate realistic samples, perform inference, or compute exact densities. Indeed we may rewrite \eqref{likelihood} via
\begin{align}
\begin{split}
    p(x)& =\sum_{k: x\in U_k} p(x|k)p(k)\\
    & = \mathbb{E}_{k\sim \tilde{p}_x(k)}[p(x|k)] \underbrace{\sum_{j:x\in U_j} p(j)}_{\mbox{piecewise constant}}
\end{split}
\end{align}
Where $\tilde{p}_x(k) = p(k | p(x|k)>0)=p(k)/\sum_{j: x\in U_j} p(j)$. Thus, during {\em training} of the conditional normalizing flow we may replace the expectation $\mathbb{E}_{k\sim\tilde{p}(k)}[p(x|k)]$ with the stochastic quantity $p(x|k), k\sim \tilde{p}(k)$, performing only a single gradient descent pass per data-point as opposed to $m(x)$ passes. If the exact likelihood is needed, however, it can be computed at the cost of evaluating the normalizing flow and its Jacobian $m(x)$ times:
\begin{equation}\label{eqn:full_likelihood}
    \begin{split}
    p(x) &= \sum_{k: x\in U_k} p(x|k) p(k) \\ &= \sum_{k: x\in U_k} p_k q(f_k(x))|\lambda_k(c_k^\dagger(x))|^{-1}\\&\prod_{l=1}^L|\det[J f_k^l(f_k^{l+1}\circ\cdots\circ f_k^L(x))]| 
\end{split}
\end{equation}

Since $z$ and $k$ are independent, one can perform the {\em sampling task} via first sampling $z\sim q(z)$ and $k\sim p(k)$ and then computing a single forward pass of the normalizing flow chosen by $k$ to obtain $x=g_k(z)$. 

The {\em inference task} is complicated slightly by the fact that $z$ is no longer wholly determined given $x$, but instead takes values $(f_k(x))_{k: x\in U_k}$ with corresponding probabilities $(p(k|x))_{k: x\in U_k}$. One could perform a stochastic inference via sampling $k\sim p(k|x)$ and computing $z=f_k(x)$ (this amounts to choosing among the relevant charts for $x$), however if deterministic inference is preferred then of course one may always compute the expected value of $z$ as $z=\mathbb{E}_{k\sim p(k|x)}[f_k(x)]=\sum_{k: x\in U_k} p(k|x) f_k(x)$ or the most probable value of $z$ as $z= f_s(x)$ where $s=\argmax_{k: x\in U_k} p(k|x)$.

\subsection{Hard-boundary or deterministic approximation}
A particularly simple special case of the above model is the case $m=1$ and $\epsilon=0$, in which only a single chart is associated to a given $x$. This case reduces our atlas of overlapping charts to a disjoint partition of the data manifold $\mathcal{M}$. In this case $U_k$ is exactly the subset of $\mathcal{X}$ for whom $E(x)$ is closest to the encoded chart center $v_k$, and thus with the exception of $x$ lying on the chart boundaries, the random variable $k$ can be treated as a deterministic function of the random variable $x$, namely $k(x) = \argmin_{k=1,\ldots,K}||E(x)-v_k||_2 = \sum_{k=1}^K k \mathbbm{1}_{U_k}(x)$. Sampling in the hard-boundary case is identical to sampling in the soft-boundary case: generate samples for $x$ by first sampling $z\sim q(z)$ and $k\sim p(k)$ and then computing $x=g_k(z)$. Inference in the hard-boundary case is unambiguous since 
\begin{align}
\begin{split}
    &\mathbb{E}_{k\sim p(k|x)}[f_k(x)]=f_s(x) \\
    &s =\argmax_{k=1,\ldots,K} p(k|x)=\argmin_{k=1,\ldots,K} || E(x)-v_k||_2
\end{split}
\end{align}
That is to say that one performs inference by first identifying which region $R_s$ contains $x$ and then computing $z=f_s(x)$. The most significant simplification in the hard-boundary case from a computational standpoint comes in computing the likelihood $p(x)$, since if $x\in U_k$ then 
\begin{align}
\begin{split}
        p(x)&=p(x,k)=p(x | k) p(k) \\&= p(k) q(f_k(x))|\lambda_k(c_k^\dagger(x))|^{-1}\\&\prod_{l=1}^L|\det[J f_k^l(f_k^{l+1}\circ\cdots\circ f_k^L(x))]| 
\end{split}
\end{align}
Thus only one normalizing flow needs to be evaluated to compute the exact likelihood $p(x)$ (as opposed to $m(x)$ of them) and the normalizing flows may be trained using the exact likelihood as opposed to an unbiased estimator for it.

\section{Experiments}

\begin{table*}[t]
\centering
\begin{tabular}{@{}ccccccc@{}}
\toprule
\textbf{Model} & Spherical    & Helix         & Lissajous    & Twisted-Eight & Knotted          & Interlocked-Circles    \\ \midrule
Real NVP       & 3.15  $\pm$  0.07 & -3.37  $\pm$  0.16 & 2.42  $\pm$  0.07 & 0.94  $\pm$  0.15  & -2.17  $\pm$  0.14    & 0.95  $\pm$  0.13              \\
VQ-RealNVP     & 3.55  $\pm$  0.04 & -1.66  $\pm$  0.08 & 3.04  $\pm$  0.15 & 2.29  $\pm$  0.14  & 0.39  $\pm$  0.18     & 2.42  $\pm$  0.25           \\ \midrule
MAF            & 4.38  $\pm$  0.10 & -2.90  $\pm$  0.02 & 2.50  $\pm$  0.12 & 1.34  $\pm$  0.22  & -1.02  $\pm$  0.14    & 1.07  $\pm$  0.07           \\
VQ-MAF         & 4.43  $\pm$  0.14 & -0.49  $\pm$  0.03 & 3.48  $\pm$  0.16 & 2.01  $\pm$  0.10  & 0.62  $\pm$  0.16     & 2.29  $\pm$  0.18           \\ \midrule
CEF            & 0.91  $\pm$  0.07 & -3.71  $\pm$  0.09 & 0.42  $\pm$  0.15 & -0.38  $\pm$  0.21 & -2.48  $\pm$  0.26    & -0.72  $\pm$  0.11           \\
VQ-CEF         & 0.98  $\pm$  0.11 & -2.90  $\pm$  0.17 & 1.65  $\pm$  0.14 & -0.32  $\pm$  0.19 & -1.93  $\pm$  0.17    & 1.24  $\pm$  0.15           \\ \bottomrule
\end{tabular}  
\caption{ Quantitative evaluation of \textbf{Density Estimation} in terms of the test log-likelihood in nats (higher the better) on the 3D datasets. The values are averaged across 5 independent trials, $\pm$ represents the 95\% confidence interval.}
\label{density-est-quantitative}
\end{table*}

\begin{table*}[t!]
\centering
\begin{tabular}{@{}ccccccc@{}}
\toprule
\textbf{Model} & Spherical    & Helix         & Lissajous    & Twisted-Eight & Knotted          & Interlocked-Circles    \\ \midrule
Real NVP       & 0.50 $\pm$ 0.07 & -57.46 $\pm$ 2.11 & 0.18 $\pm$ 0.14 & -2.72 $\pm$ 0.90  & -8.65 $\pm$ 0.87    & -2.18 $\pm$ 0.37              \\
VQ-RealNVP     & 0.99 $\pm$ 0.14 & -3.85 $\pm$ 0.98  & 0.59 $\pm$ 0.08 & 0.18 $\pm$ 0.17  &  -1.44 $\pm$ 0.37    &  -0.11 $\pm$ 0.12          \\ \midrule
MAF            & 0.65 $\pm$ 0.26 & -92.83 $\pm$ 5.69 & 0.12 $\pm$ 0.16 & -2.77 $\pm$ 0.81  & -7.04 $\pm$ 0.49    & -2.49 $\pm$ 0.14           \\
VQ-MAF         & 1.01 $\pm$ 0.07 & -4.62 $\pm$ 0.37  & 0.59 $\pm$ 0.07 & -0.32$\pm$ 0.13  & -2.44 $\pm$ 0.11    & -0.15 $\pm$ 0.08            \\ \midrule
CEF            & -1.17 $\pm$ 0.06 & -29.90 $\pm$ 2.12 & 0.38 $\pm$ 0.14 & -4.03 $\pm$ 0.38 & -19.40 $\pm$ 1.80   & -3.42 $\pm$ 0.49           \\
VQ-CEF         & 0.80 $\pm$ 3.42  & -20.75 $\pm$ 2.22 & 0.49 $\pm$ 0.03 & -3.51 $\pm$ 0.73 & -14.44 $\pm$ 1.57   & -3.23 $\pm$ 0.19           \\ \bottomrule
\end{tabular}
\caption{ Quantitative evaluation of \textbf{Sample Generation} in terms of the log-likelihood of generated samples in nats (higher the better) on the 3D datasets. The values are averaged across 5 independent trials, $\pm$ represents the 95\% confidence interval.}
\label{sample-gen-quantitative}
\end{table*}

\begin{figure*}[t]
    \centering
    \begin{tabular}{cc}
    \subfloat[Spherical]{\includegraphics[width = 0.16\linewidth]{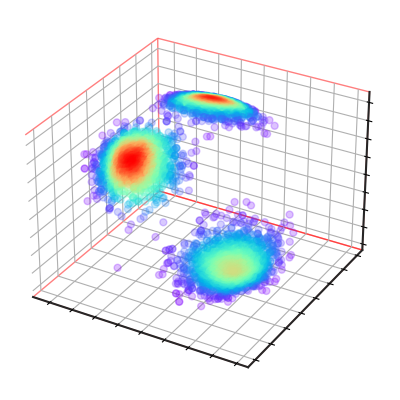}}
    \subfloat[Helix]{\includegraphics[width = 0.16\linewidth]{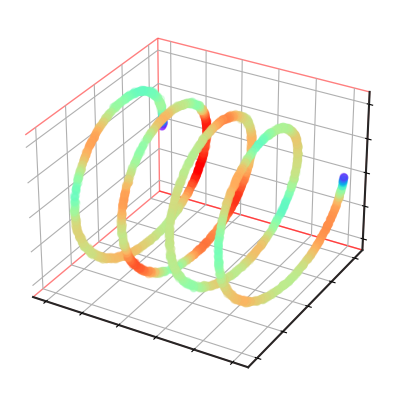}}
    \subfloat[Lissajous]{\includegraphics[width = 0.16\linewidth]{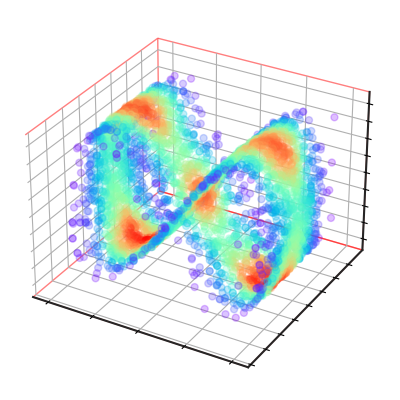}}
    \subfloat[Twisted-Eight]{\includegraphics[width = 0.16\linewidth]{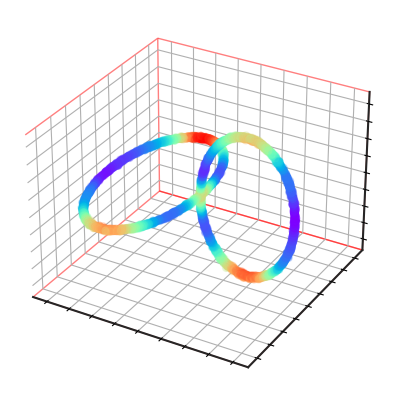}}
    \subfloat[Knotted]{\includegraphics[width = 0.16\linewidth]{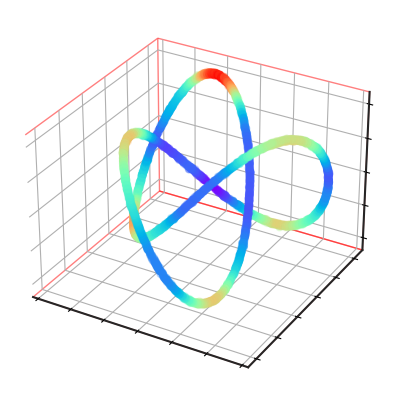}}
    \subfloat[InterlockedCircles]{\includegraphics[width = 0.16\linewidth]{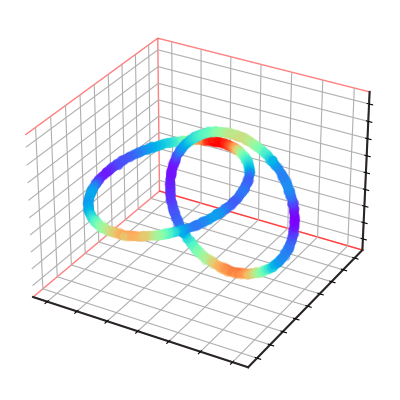}}
    \end{tabular}
    \includegraphics[width = 0.155\linewidth]{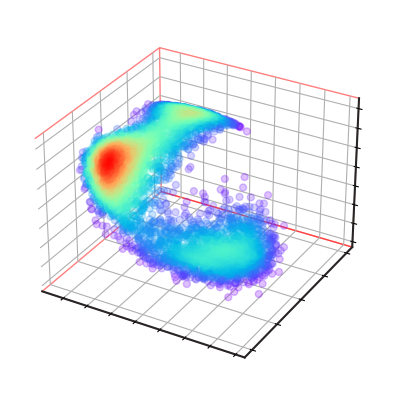}
    \includegraphics[width = 0.155\linewidth]{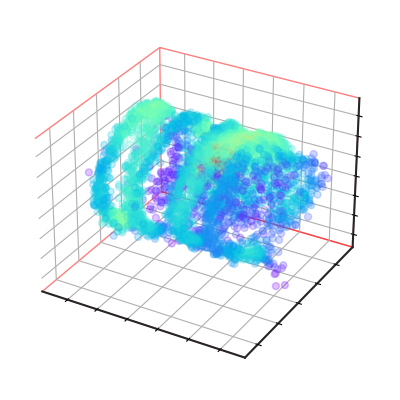}
    \includegraphics[width = 0.155\linewidth]{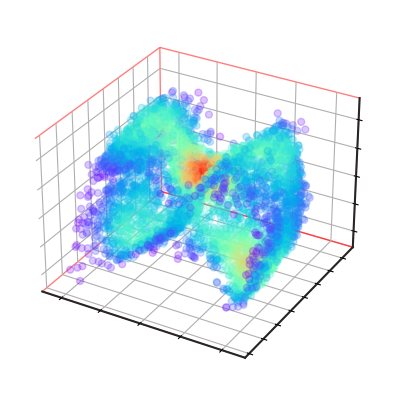}
    \includegraphics[width = 0.155\linewidth]{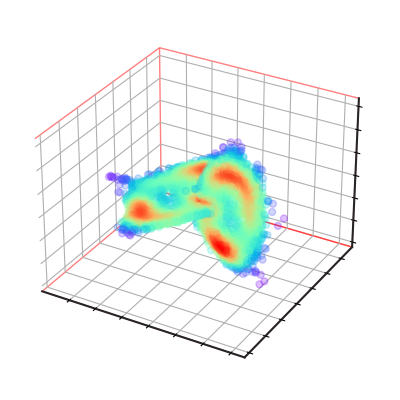}
    \includegraphics[width = 0.155\linewidth]{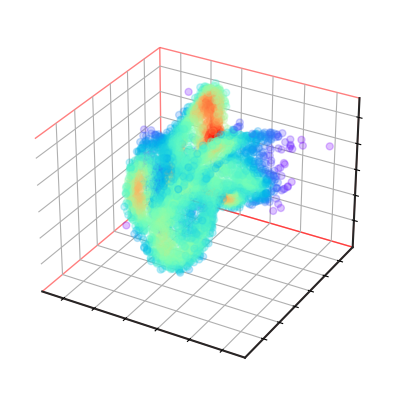} 
     \includegraphics[width = 0.155\linewidth]{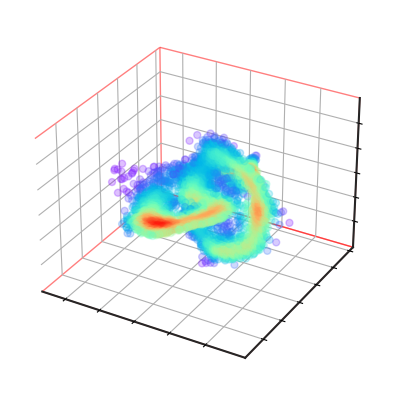} \\ 
    
    \includegraphics[width = 0.155\linewidth]{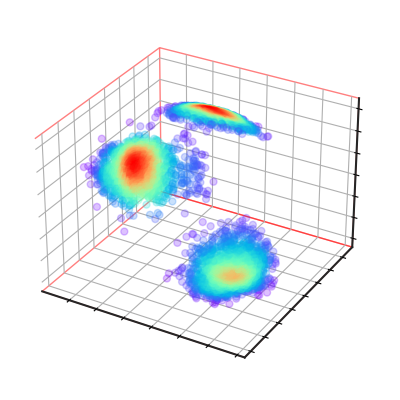}
    \includegraphics[width = 0.155\linewidth]{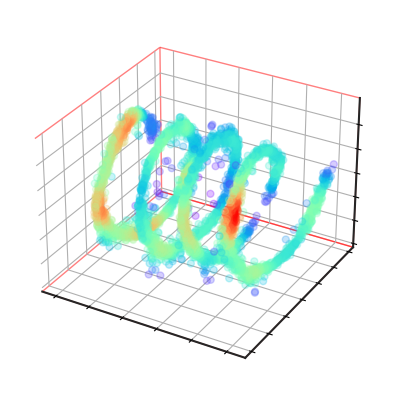}
    \includegraphics[width = 0.155\linewidth]{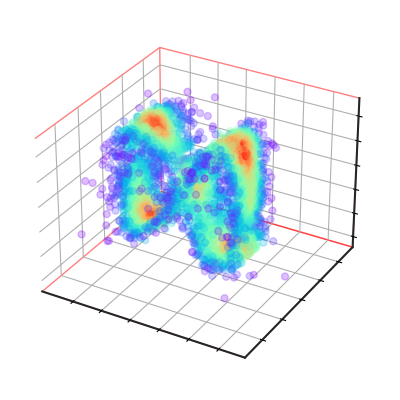}
    \includegraphics[width = 0.155\linewidth]{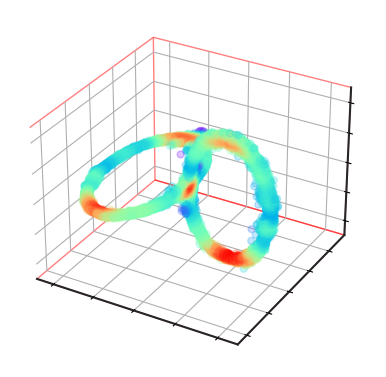}
    \includegraphics[width = 0.155\linewidth]{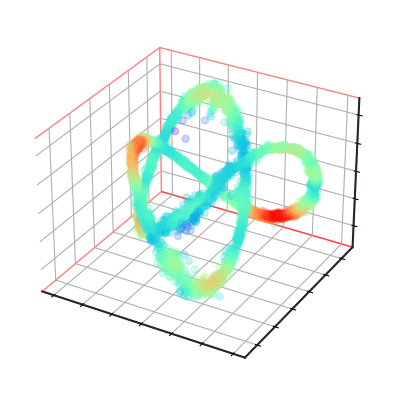}
    \includegraphics[width = 0.155\linewidth]{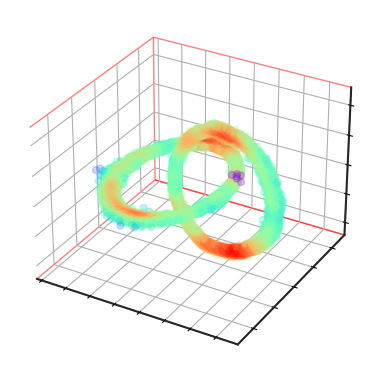}
    \caption{Qualitative visualization of the samples generated by a classical flow - RealNVP (Middle Row) and its VQ-counterpart (Bottom Row) trained on Toy 3D data distributions (Top Row).}
    \label{3D-data-qualitative-results}
\end{figure*}

To experimentally validate the efficacy of the proposed framework, we consider six 3-dimensional data distributions over manifolds of varying complexity as shown in Figure \ref{3D-data-qualitative-results}. Each dataset consists of $10,000$ datapoints, $5,000$ of which we use for training and $2,500$ each for validation and testing. We train three different normalizing flows - RealNVP \cite{realnpv}, Masked Autoregressive Flows (MAF) \cite{maf} and Conformal Embedding Flows (CEF) \cite{ross2021conformal} over these datasets with and without the augmentation of our framework. We refer to a base \textit{flow} augmented with the vector quantized conditioning as VQ-\textit{flow}. We define each model using $5$ flow transformations and train them for $100$ epochs using an Adam optimizer, early stopping if the validation performance does not improve over 10 epochs. For CEF, we use a 2-dimensional RealNVP as the base flow, which is then raised to the 3-dimensional space using the conformal embedding.  We parameterize the VQ-AE using feedforward neural networks and use a latent dimension of $2$ with $k=32$, to learn the partitioning of the data manifold. To define the conditional normalizing flow, we use the parameterization given in \cite{lu2020conditional}. We evaluate the models for density estimation and sample generation. We follow the same hyperparameters for a base flow and its VQ-counterpart without any tuning and report the performance averaged over $5$ independent trials. We defer further details on data generation, implementation as well as results on additional 3D data distributions to the supplementary material.

\subsection{Density Estimation}

The ability to compute exact likelihood is one of the critical features of a normalizing flow that makes it a potential tool in solving inverse problems. Improving the expressive power of flows can thus enhance their utility as priors by better modeling the data density. Thus, we first evaluate the proposed framework's ability to enhance the expressivity of flows to perform better density estimation. Table \ref{density-est-quantitative} compares the log-likelihood (in nats) achieved by different flow models with and without the VQ-augmentation on a held-out test set. A higher value indicates a better learned density. We observe that VQ-flows are able to achieve higher test log-likelihoods than their non-VQ-counterparts consistently across the considered data distributions. Thus, our framework enables better density estimation for normalizing flows over complex manifolds.

\subsection{Sample Generation}

A key desiderata of an expressive generative model is its ability to generate high fidelity samples from the data distribution. Figure \ref{3D-data-qualitative-results} visualizes the samples generated by a RealNVP flow trained on the 3D data distributions with and without the VQ augmentation. We observe that while the classical flow is able to generate samples from the data manifold, it also generates data points off the manifold, resulting in a poor fit to the real data distribution, as expected due to the requirements of being a global diffeomorphism. VQ-flows are seen to overcome these restrictions and generate samples better approximating the real data distribution. For a quantitative comparison, we evaluate the log-likelihood of the generated samples using a kernel density estimator fitted on the training data. We use a Gaussian kernel, with an optimal bandwidth obtained through cross-validation for each data distribution. We observe (Table \ref{sample-gen-quantitative}) that VQ-flows, owing to their ability to model the topology of the data manifold better, significantly outperform their non-VQ counterparts on sample generation.

\subsection{High Dimensional Data}
\begin{figure}[h!]
    \centering
     \begin{tabular}{cc}
    \subfloat[RealNVP]{\includegraphics[width=0.48\linewidth]{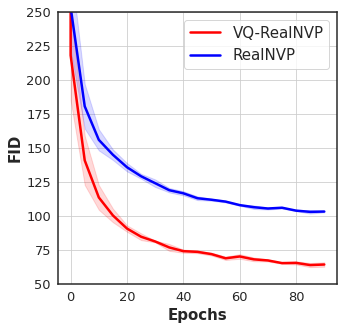}}
    \subfloat[MAF]{\includegraphics[width=0.49\linewidth]{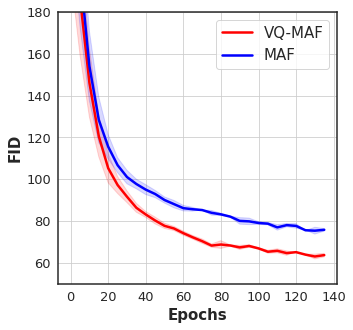}}
    \end{tabular}
    \caption{ FID scores (lower the better) across the training of (a) RealNVP and (b) MAF on the MNIST dataset. The shaded region represents the standard deviation over 3 trials.}
    \label{mnist-fid}
\end{figure}
To study the scalability of the proposed approach to higher dimensions, we consider the MNIST \cite{deng2012the} dataset comprising 60,000 grayscale images of handwritten digits, each of dimension 784 (28$\times$28). We train RealNVP and MAF with and without the VQ-augmentation and plot FID scores of the generated samples across their training iterations in Figure \ref{mnist-fid}. We observe that VQ-flows are able to achieve better performance (lower FID scores) faster than their non-VQ counterparts, hence validating the utility of the proposed approach in higher dimensions. An interesting observation here is that while MAF results on MNIST are much better than that of RealNVP, both VQ-MAF and VQ-RealNVP converge to the similar (low) FID scores. This early result seems to validate our hypothesis that the core difficulties (topology, dimensionality, etc), even on real datasets, can perhaps be better addressed by the proposed research direction than by improving backbone ‘single’ flow models. 

\subsection{Ablation Study} 
\begin{figure*}[h!]
    \centering
     \begin{tabular}{ccc}
    \subfloat[Sample Generation]{\includegraphics[width=0.28\linewidth]{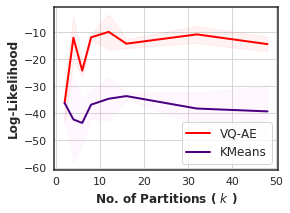}}
    \subfloat[Density Estimation]{\includegraphics[width=0.28\linewidth]{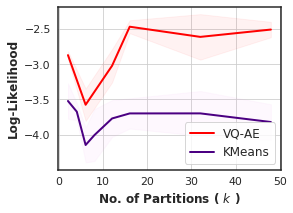}}
    \subfloat[Upon further training ]{\includegraphics[width=0.26\linewidth]{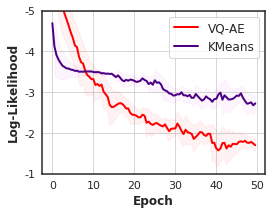}}
    \end{tabular}
    \caption{\textbf{Ablation Study} on the effect of the partitioning method and the number of partitions $k$ on sample generation (a) and density estimation (b). (c)-The learning trajectory of the flow for a fixed $k$(=32), in terms of validation log-likelihood. The shaded region represents the standard deviation over 3 independent trials.}
    \label{ablation}
\end{figure*}

Parameterizing the partitioning function using a VQ-AE is a design choice and the no. of partitions $k$ to consider over the data manifold is an important hyperparameter underlying the proposed framework. We conduct an ablation study to evaluate the sensitivity of our approach on $k$ and the partitioning method. We consider k-means clustering as an alternative design choice for the partitioning function. We train a RealNVP flow over the HELIX data distribution using k-means and VQ-AE, across increasing values of $k$. We plot the validation log-likelihood post training for $25$ epochs as a function of $k$ in Figure \ref{ablation}. We observe that VQ-AE results in better performance of the flow consistently across $k$, over k-means. Further, the choice of $k$ beyond a threshold does not have any significant effect on the model, hence it is sufficient to fix it to a large enough value.

\section{Future Work \& Conclusion}
Our framework is particularly well suited to high dimensional datasets (such as natural images) that obey the manifold hypothesis, an avenue we hope to explore in the sequel. One of the practical issues we encountered with our approach is that training  $g_k$  only on samples from $U_k$ does not always restrict the learned $p(x|k)$ to be supported only on $U_k$. In such cases, the sum over $k$ such that $x\in U_k$ in  \eqref{eqn:full_likelihood} yields an underestimate for $p(x)$, and the total sum  $k=1,\ldots, K$ must be used instead during testing. In the future, we hope to address this issue by explicitly discouraging the generation of samples outside $U_k$. 

To summarize, motivated by differential and conformal geometry, we have developed a novel probabilistic framework for ``local'' flows. We have demonstrated experimentally on toy data distributions with various topological features that this framework outperforms global flows - both dimension preserving (bijective flows) and dimension raising (embedding flows). Our framework is agnostic to the type of flow transformation employed and retains the key feature of normalizing flows: exact density evaluation. As such, we argue that using local flows as probabilistic chart maps over the data manifold is a natural way to overcome limited expressivity in the presence of dimension change or other topological impediments.

\section{Appendix}\label{appendix}
\begin{proof}[Proof of Proposition 1]
Proving $(i)$. 

We can compute the joint distribution over $x$ and $k$ - $p(x,k)$ as given below:
\begin{align} %\begin{split}
\label{xkjoint}
    &p(x,k)=\int_\mathcal{Z} p(x,z,k) dz = p_k\int_\mathcal{Z} \delta(x-g_k(z))q(z) dz \nonumber\\
    &= p_k \mathbbm{1}_{V_k}(x) \times \nonumber \\
    &\int_\mathcal{Z} \delta(z-f_k(x)) |\det[J g_k (z)^T J g_k(z)]|^{-\frac{1}{2}}q(z) dz \nonumber\\
    &= p_k \mathbbm{1}_{V_k}(x) |\det[J g_k (f_k(x))^T J g_k (f_k(x))]|^{-\frac{1}{2}}q(f_k(x)) \nonumber\\
    &= p_k \mathbbm{1}_{V_k}(x) |\det[J f_k (x) J f_k(x)^T]|^{\frac{1}{2}} q(f_k(x))
%\end{split}
\end{align}

Proving $(ii)$. It is readily verified that $p(z) = q(z)$ and $p(k)=p_k$, in particular:
\begin{align}\begin{split}\label{zprior}
    p(z)&=\sum_{k=1}^K\int_\mathcal{X} p(x,z,k) dx\\
    &= \sum_{k=1}^K p_k \int_\mathcal{X} \delta(x-g_k(z))q(z) dx\\
    &= q(z) \sum_{k=1}^K p_k = q(z)
\end{split}\end{align}
and,
\begin{align}\begin{split}\label{kprior}
    p(k) &=\int_{X} p(x,k) dx\\
    &= p_k \int_{X}  \mathbbm{1}_{V_k}(x) |\det[J f_k (x)J f_k (x)^T]|^{\frac{1}{2}} q(f_k(x)) dx\\
    &= p_k \int_{V_k}  |\det[J f_k (x)J f_k (x)^T]|^{\frac{1}{2}} q(f_k(x)) dx\\
    &= p_k \int_{\mathcal{Z}} q(z) dz = p_k
\end{split}\end{align}
Proviing $(iii)$. Taken together, \eqref{zprior} and \eqref{kprior} yield that $z$ and $k$ are independent random variables since,
\begin{align}
    p(z,k)= \int_\mathcal{X} p(x,z,k) dx = p_k q(z)= p(k)p(z)
\end{align}
Proving $(iv)$. Dividing \eqref{xkjoint} by $p(k)=p_k$ we get that the distribution of $x$ conditioned on a particular chart is given by:
\begin{align}\label{xgivenk}
    p(x|k) = \mathbbm{1}_{V_k}(x) |\det[J f_k (x) J f_k(x)^T]|^{\frac{1}{2}} q(f_k(x))
\end{align}
In particular, $p(x|k)$ is zero unless $x\in U_k$. Meanwhile $p(k|x)$ is given by the Bayes' formula as:
\begin{align}\label{kgivenx}
\begin{split}
    p(k|x) &= \frac{p(x|k)p(k)}{\sum_{j=1}^K p(x|j)p(j)}\\
    &= \frac{p_k \mathbbm{1}_{V_k}(x) |\det[J f_k (x)J f_k (x)]|^{\frac{1}{2}} q(f_k(x))}{\sum_{j : x\in U_j}p_j |\det[J f_j (x) J f_j (x)^T]|^{\frac{1}{2}} q(f_j(x))}
\end{split}
\end{align}
Proving $(v)$. Note that the distribution $p(k|x)$ is thus also zero unless $x\in U_k$, a fact that will be employed during inference. Finally the density $p(x)$ is given by:
\begin{align}
\begin{split}
    p(x) &= \sum_{k=1}^M p(x|k)p(k)\\
    &= \sum_{k : x\in U_k} p_k |\det[J f_k (x)J f_k(x)^T]|^{\frac{1}{2}} q(f_k(x))
\end{split}
\end{align}
\end{proof}
\bibliographystyle{unsrt}
\bibliography{references}
\end{document}

% --- supplement: supp.tex ---

\maketitle
This document presents the discussions and results left out in the main paper due to space constraints. We begin with the details regarding the considered data distributions. We then present quantitative and qualitative results left out from the main paper. We finally conclude with further information regarding the experiments and implementation. 

\section{Data Generation}
We generated and experimented with ten 3-dimensional data distributions over manifolds of varying complexity. Figure \ref{3D-data} provides the visualizations for each of the considered datasets. We elaborate more on the equations used to generate data from each of these distribution below.
\begin{figure*}[h!]
    \centering
    \begin{tabular}{cc}
    \subfloat[Spherical]{\includegraphics[width = 0.20\linewidth]{figures/datasets/realdata_SPHERICAL.png}}
    \subfloat[Helix]{\includegraphics[width = 0.20\linewidth]{figures/datasets/realdata_HELIX.png}}
    \subfloat[Lissajous]{\includegraphics[width = 0.20\linewidth]{figures/datasets/realdata_LISSAJOUS.png}}
    \subfloat[Twisted-Eight]{\includegraphics[width = 0.20\linewidth]{figures/datasets/realdata_TwistedEIGHT.png}}
    \subfloat[Knotted]{\includegraphics[width = 0.20\linewidth]{figures/datasets/realdata_KNOTTED.png}}\\
    \subfloat[InterlockedCircles]{\includegraphics[width = 0.20\linewidth]{figures/datasets/realdata_InterlockedCIRCLES.png}}
    \subfloat[Non-Knotted]{\includegraphics[width = 0.20\linewidth]{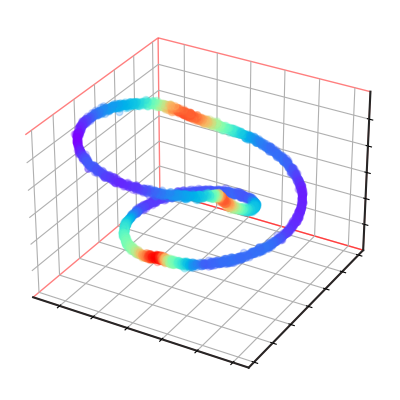}}
    \subfloat[Bent-Lissajous]{\includegraphics[width = 0.20\linewidth]{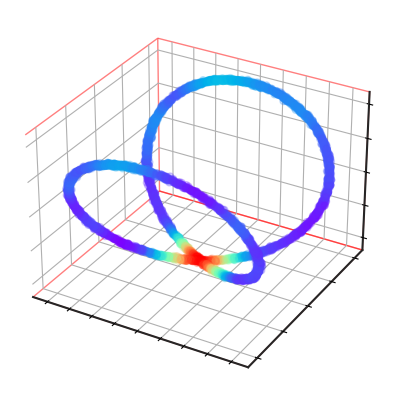}}
    \subfloat[Disjoint-Circles]{\includegraphics[width = 0.20\linewidth]{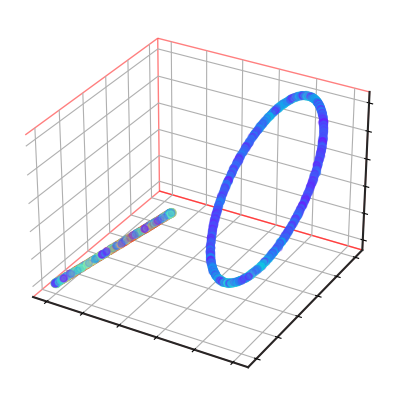}}
    \subfloat[Star]{\includegraphics[width = 0.20\linewidth]{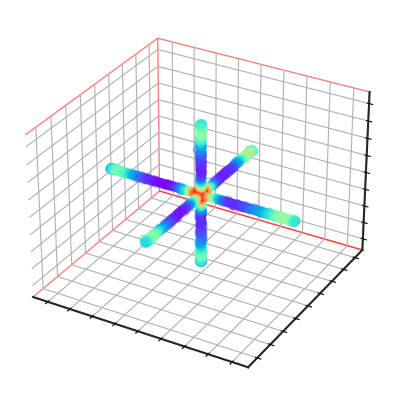}}
    \end{tabular}
    \caption{Visualizations of the considered 3-dimensional data distributions}
    \label{3D-data}
\end{figure*}

\subsection{Spherical}
We considered three mixture of Gaussians in the 3-dimensional space with parameters $(\mu_1,\sigma)$,$(\mu_2,\sigma)$,$(\mu_3,\sigma)$ respectively. Samples ($x$) drawn uniformly from each of the three Gaussians were then projected on to the unit sphere in 3D as $x / ||x||$. The means ($\mu_i$) were sampled from a standard normal distribution and the standard deviation was set to 0.2. The exact parameter values used are: $\mu_1=(-0.15,-0.77, 0.94)$, $\mu_2=(0.79,-0.75,-0.02)$, $\mu_3=(0.04,0.40,1.31)$ and $\sigma=0.2$.

\subsection{Helix}
To generate the Helix data distribution, we first sample $\theta \in \R$ uniformly from $[0,8\pi]$. For each $\theta$, we then generate the datapoint $\mathbf{x}=(x,y,z)+\epsilon$, where $\epsilon \sim N(0,\sigma=0.01)$ and $-$
\begin{itemize*}
    \item $x = \theta$
    \item $y = \cos\theta$
    \item $z = \sin\theta$
\end{itemize*}

\subsection{Lissajous}
To generate the Lissajous data distribution, we first sample $\theta \in \R$ uniformly from $[-\pi,\pi]$. For each $\theta$, we then generate the datapoint $\mathbf{x}=(x,y,z)+\epsilon$, where $\epsilon \sim N(0,\sigma=0.01)$ and $-$ 
\begin{itemize*}
    \item $x = \cos\theta$
    \item $y = 0$
    \item $z = \sin(2\theta)$
\end{itemize*}

\subsection{Twisted-Eight}
To generate the Twisted-Eight data distribution, we sample $\theta \in \R$ uniformly from $[-\pi,\pi]$. For each $\theta$, we then generate two datapoints $\mathbf{x_1}=(x_1,y_1,z_1)+\epsilon$ and $\mathbf{x_2}=(x_2,y_2,z_2)+\epsilon$, where $\epsilon \sim N(0,\sigma=0.01)$ and $-$ 
\begin{itemize*}
    \item $x_1 = \sin\theta$
    \item $y_1 = \cos\theta$
    \item $z_1 = 0$
\end{itemize*}.  
\\
\begin{itemize*}
    \item $x_2 = 2+\sin\theta$
    \item $y_2 = 0$
    \item $z_2 = \cos\theta$
\end{itemize*}\\
The final distribution is the union of the distributions over $\mathbf{x_1}$ and $\mathbf{x_2}$.

\subsection{Knotted}
To generate the Knotted data distribution, we first sample $\theta \in \R$ uniformly from $[-\pi,\pi]$. For each $\theta$, we then generate the datapoint $\mathbf{x}=(x,y,z)+\epsilon$, where $\epsilon \sim N(0,\sigma=0.01)$ and $-$ 
\begin{itemize*}
    \item $x = \sin\theta + 2\sin2\theta$
    \item $y = \cos\theta - 2\cos2\theta$
    \item $z = \sin3\theta$
\end{itemize*}

\subsection{Interlocked-Circles}
To generate the Interlocked-Circles data distribution, we sample $\theta \in \R$ uniformly from $[-\pi,\pi]$. For each $\theta$, we then generate  two datapoints $\mathbf{x_1}=(x_1,y_1,z_1)+\epsilon$ and $\mathbf{x_2}=(x_2,y_2,z_2)+\epsilon$, where $\epsilon \sim N(0,\sigma=0.01)$ and $-$ 
\begin{itemize*}
    \item $x_1 = \sin\theta$
    \item $y_1 = \cos\theta$
    \item $z_1 = 0$
\end{itemize*}.  
\\
\begin{itemize*}
    \item $x_2 = 1+\sin\theta$
    \item $y_2 = 0$
    \item $z_2 = \cos\theta$
\end{itemize*}\\
The final distribution is the union of the distributions over $\mathbf{x_1}$ and $\mathbf{x_2}$.

\subsection{Non-Knotted}
To generate the Non-Knotted data distribution, we first sample $\theta \in \R$ uniformly from $[-\pi,\pi]$. For each $\theta$, we then generate the datapoint $\mathbf{x}=(x,y,z)+\epsilon$, where $\epsilon \sim N(0,\sigma=0.01)$ and $-$
\begin{itemize}
    \item $x = (1 + 0.5\cos3\theta) \cos2\theta$
    \item $y = (1 + 0.5\cos3\theta) \sin2\theta$
    \item $z = 0.5\sin\theta$
\end{itemize}

\subsection{Bent-Lissajous}
To generate the Bent-Lissajous data distribution, we first sample $\theta \in \R$ uniformly from $[-\pi,\pi]$. For each $\theta$, we then generate the datapoint $\mathbf{x}=(x,y,z)+\epsilon$, where $\epsilon \sim N(0,\sigma=0.01)$ and $-$
\begin{itemize*}
    \item $x = \sin2\theta$
    \item $y = \cos\theta$
    \item $z = \cos2\theta$
\end{itemize*}

\subsection{Disjoint-Circles}
To generate the Disjoint-Circles data distribution, we sample $\theta \in \R$ uniformly from $[-\pi,\pi]$. For each $\theta$, we then generate  two datapoints $\mathbf{x_1}=(x_1,y_1,z_1)+\epsilon$ and $\mathbf{x_2}=(x_2,y_2,z_2)+\epsilon$, where $\epsilon \sim N(0,\sigma=0.01)$ and $-$ 
\begin{itemize*}
    \item $x_1 = -1 + \sin\theta$
    \item $y_1 = -1 + \sin\theta$
    \item $z_1 = -1 + \sin\theta$
\end{itemize*}.  
\\
\begin{itemize*}
    \item $x_2 = 2+\sin\theta$
    \item $y_2 = 1 + 2\cos\theta$
    \item $z_2 = 1 + 2\cos\theta$
\end{itemize*}\\
The final distribution is the union of the distributions over $\mathbf{x_1}$ and $\mathbf{x_2}$.

\subsection{Star}
To generate the Star data distribution, we sample $\theta \in \R$ uniformly from $[-\pi,\pi]$. For each $\theta$, we then generate three datapoints $\mathbf{x_1}=(x_1,y_1,z_1)+\epsilon$, $\mathbf{x_2}=(x_2,y_2,z_2)+\epsilon$, and  $\mathbf{x_3}=(x_3,y_3,z_3)+\epsilon$ where $\epsilon \sim N(0,\sigma=0.01)$ and $-$ 
\begin{itemize*}
    \item $x_1 = \sin\theta$
    \item $y_1 = 0 $
    \item $z_1 = 0$
\end{itemize*}.  
\\
\begin{itemize*}
    \item $x_2 = 0$
    \item $y_2 = \sin\theta$
    \item $z_2 = 0$
\end{itemize*}
\\\begin{itemize*}
    \item $x_3 = 0$
    \item $y_3 = 0$
    \item $z_3 = \sin\theta$
\end{itemize*}\\
The final distribution is the union of the distributions over $\mathbf{x_1}$, $\mathbf{x_2}$ and $\mathbf{x_3}$.

\section{Additional Results}
\subsection{Density Estimation and Sample Generation}
We provide quantitative evaluations for density estimation and sample generation over four additional 3-dimensional data distribution discussed Section 1 in Table \ref{density-est-quant} and Table \ref{sample-gen-quantitative} respectively. We can observe that the models trained with the augmentation of our framework achieves better performance for both density estimation and sample generation than their corresponding baselines. We also validate and compare the goodness of the generated samples through qualitative visualizations in Figures \ref{qual-sample-begin} - \ref{qual-sample-end}. Note that CEFs perform poorer than the other baselines because they consist of a 2-dimensional base flow which is then raised to the 3-dimensional space using a conformal embedding. The other flows (RealNVP and MAF) are, on the other hand, trained in the 3-dimensional space. A particularly interesting observation here is that the data distributions learned by CEF without VQ-augmentation tend to be planar in the 3-dimensional space. This demonstrates the limited expressivity of global conformal dimension raising transformations. The local conformal transformations obtained with the augmentation of our framework are, on the other hand, able to better capture the global structure of the data distribution and generate better samples. 

\subsection{Gaussianization}
The ability of a normalizing flow to generate high fidelity samples from given data distribution is also governed by whether the latent space learned through the flow transformation matches the assumed prior. For a flow with a Normal distribution assumed in the latent space, this means that the forward flow transformations should effectively  Gaussianize the given data distribution. In Figures \ref{gauss-begin} to \ref{gauss-end} we thus visualize and compare how different data distributions are transformed gradually by each layer of a RealNVP flow trained with and without the augmentation of our proposed framework. We can observe that the models trained with VQ-augmentation learn to better transform the input space to match the assumed prior. As a result, they are also able to generate better samples.

\section{Implementation Details}

\begin{figure*}[h!]
    \centering
     \begin{tabular}{cc}     
    \subfloat[Forward Transformation]{\includegraphics[width=0.3\linewidth]{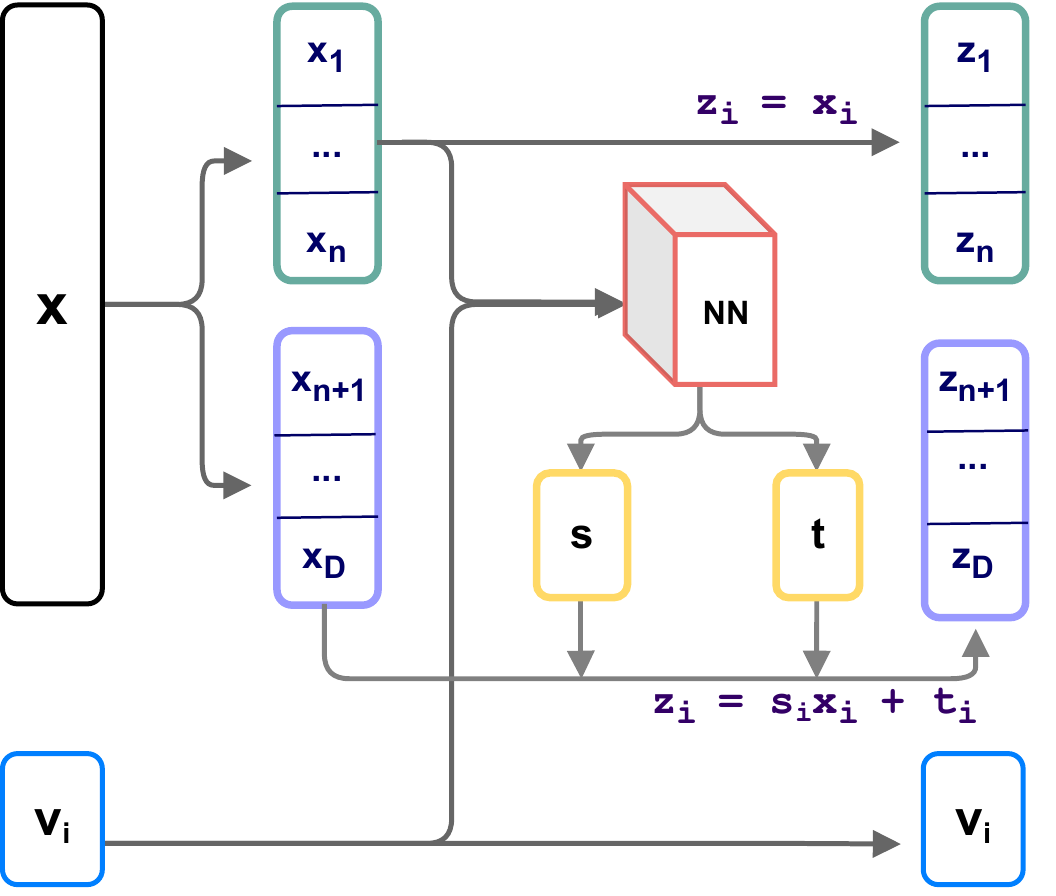}} \hspace{50pt}
    \subfloat[Inverse Transformation]{ \includegraphics[width=0.3\linewidth]{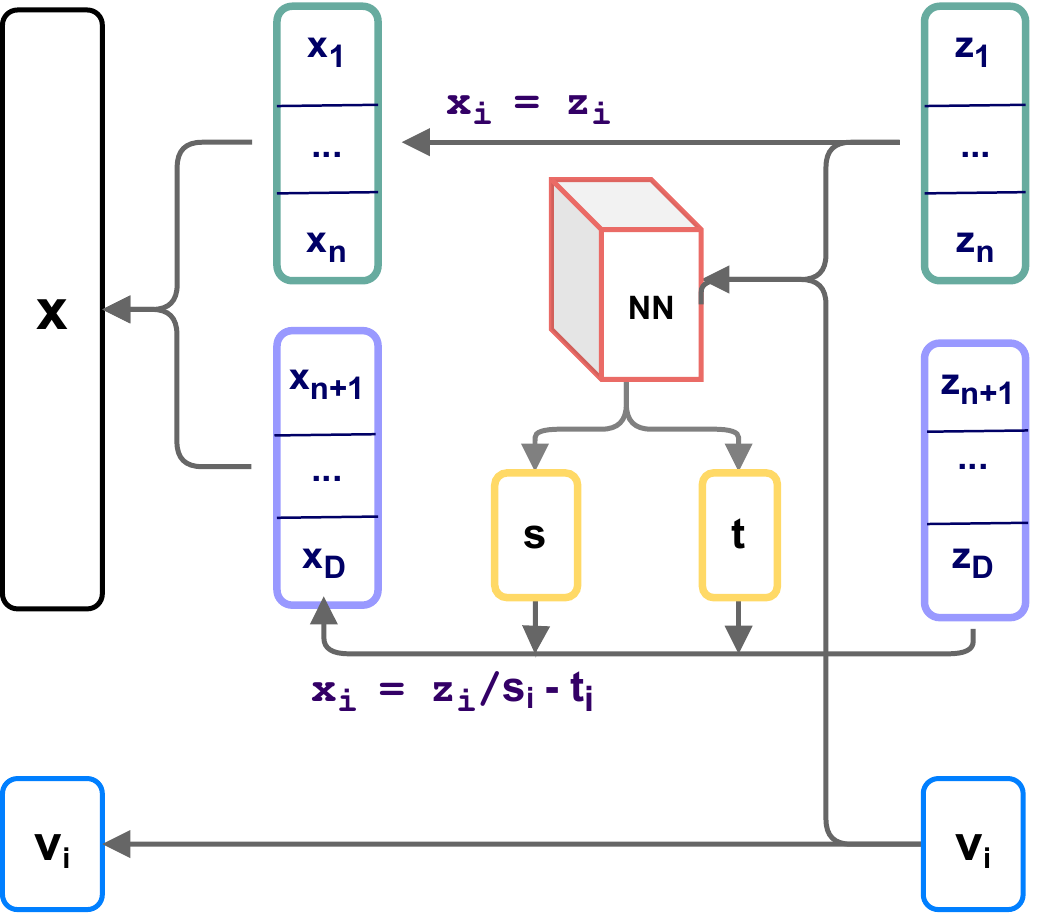}}
    \end{tabular}
    \caption{Parametrizing the conditional coupling layer transformation.}
    \label{fig:cond-transform}
\end{figure*}

To experimentally validate the efficacy of the proposed framework, we consider the ten datasets presented in Section 1. Each dataset consists of $10,000$ datapoints, $5,000$ of which we use for training and $2,500$ each for validation and testing. We train three different normalizing flows - RealNVP, Masked Autoregressive Flows (MAF) and Conformal Embedding Flows (CEF).  We define each model using $5$ flow transformations and train them for $100$ epochs using an Adam optimizer with a learning rate of $1e-4$ and a batch size of $128$. We follow the same hyperparameters for a base flow and its VQ-counterpart without any tuning and report the performance averaged over $5$ independent trials. 
 We early stop if the validation performance does not improve over 10 epochs. The architectural details pertaining to each of the models are given below: 

\textbf{RealNVP}- We compose the RealNVP flow using $5$ coupling layer transformations, each followed by a batch-normalization. We use feedforward networks with $2$ hidden layers, each consisting of $128$ hidden nodes as the non-linear transformation to obtain the scaling and translation parameters. We use $tanh$ as the activation function for the scale network and $relu$ as the activation function for the translation network.

\textbf{MAF}- We compose the MAF flow using $5$ masked auto-regressive layer transformations, each followed by a batch-normalization. In each layer, we use a masked feedforward network with $1$ hidden layer, consisting of $128$ hidden nodes. We use $relu$ as the activation function for the feedforward network.

\textbf{CEF}-
 We compose the CEF flow  using $5$ coupling layer transformations in 2-dimensional space, followed by the conformal transformation that raises the dimension to $3$. We use the same architecture reported above for RealNVP in the coupling transforms. The conformal embedding is parameterized as given in \cite{ross2021conformal}, using a composition of Scaling, Shifting, Orthogonal, Special Conformal and Padding transformations. As CEF is an injective flow, we follow \cite{ross2021conformal} and train it to minimize the reconstruction loss for 20 epochs prior to employing the maximum likelihood training.

\textbf{VQ-\textit{flow}}-
We parameterize the encoder and decoder of the VQ-AE using feedforward neural networks. In each network, we use $4$ hidden layers each consisting of $128$ hidden nodes followed by batch-normalization and a leaky$-relu$ activation with negative slope of $0.2$.
To learn the partitioning of the data manifold, we use a latent dimension of $2$ with $k=32$ learnable quantized centers. We train the VQ-AE for $50$ epochs to minimize the reconstruction loss using an Adam optimizer with a learning rate of $1e-4$ and batch size of $128$.

To define the conditional normalizing flow, we use the parameterization given in \cite{lu2020conditional,maf}. The key idea is to incorporate the quantized center as additional conditioning information to the unrestricted (non-invertible) neural network used in the coupling and auto-regressive transformations. Figure  \ref{fig:cond-transform} demonstrates the construction of such a conditional coupling layer transform. To define conditional conformal transformations, we use $k$ conformal embeddings and index into it using the quantized center. We believe that we can extend our framework to other arbitrary flows by adapting the conditional flow transformations defined in \cite{cornish2020relaxing}, which we leave to future work.

\begin{figure*}[h!]
    \centering
    \includegraphics[width=\linewidth]{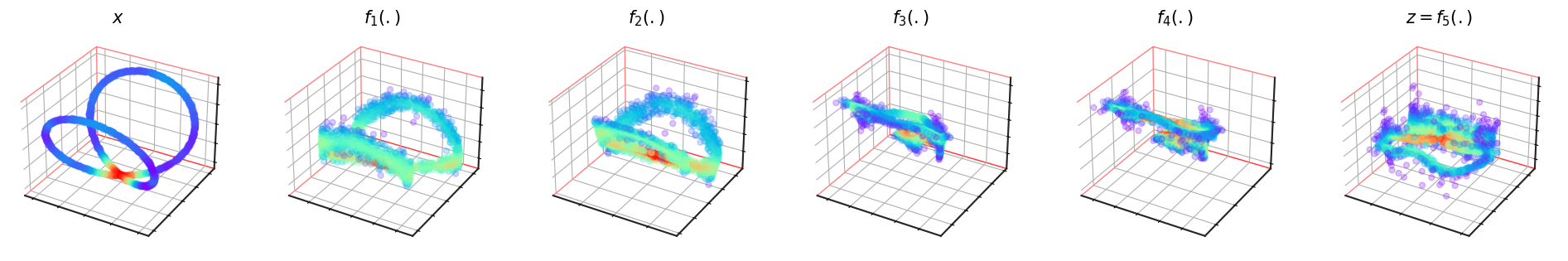}\\
    \includegraphics[width=\linewidth]{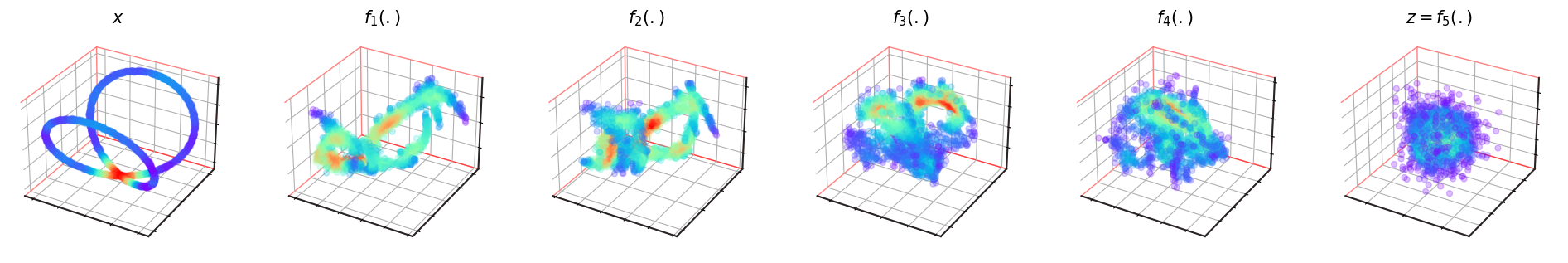}
    \caption{Visualization of the latent transformation achieved using RealNVP (Top Row) and VQ-RealNVP (Bottom Row) on the Bent-Lissajous data distribution.}
    \label{gauss-begin}
\end{figure*}

\begin{figure*}[h!]
    \centering
    \includegraphics[width=\linewidth]{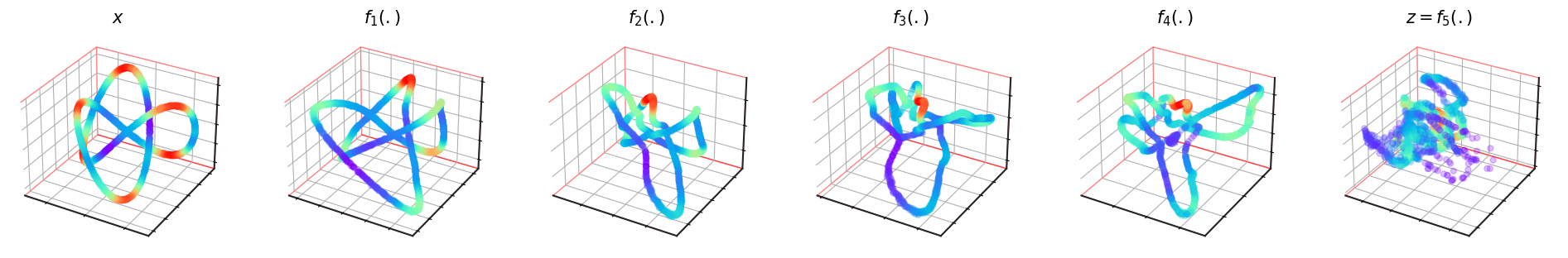}\\
    \includegraphics[width=\linewidth]{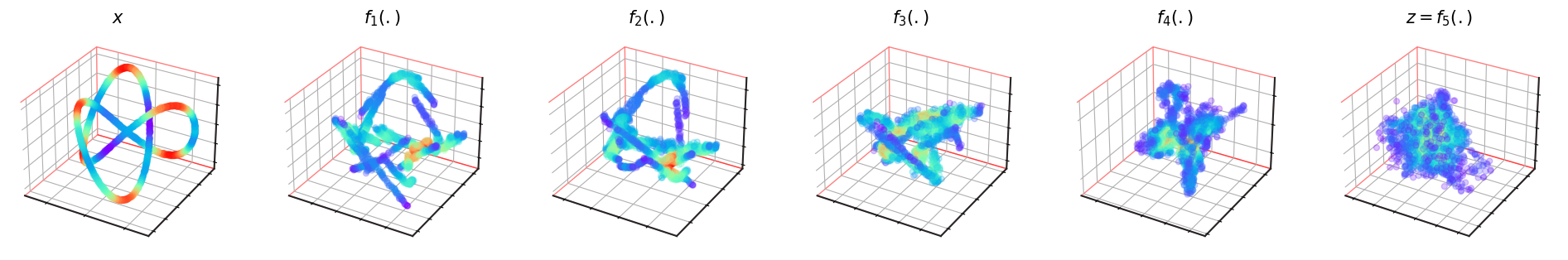}
    \caption{Visualization of the latent transformation achieved using RealNVP (Top Row) and VQ-RealNVP (Bottom Row) on the Knotted data distribution.}
\end{figure*}

\begin{figure*}[h!]
    \centering
    \includegraphics[width=\linewidth]{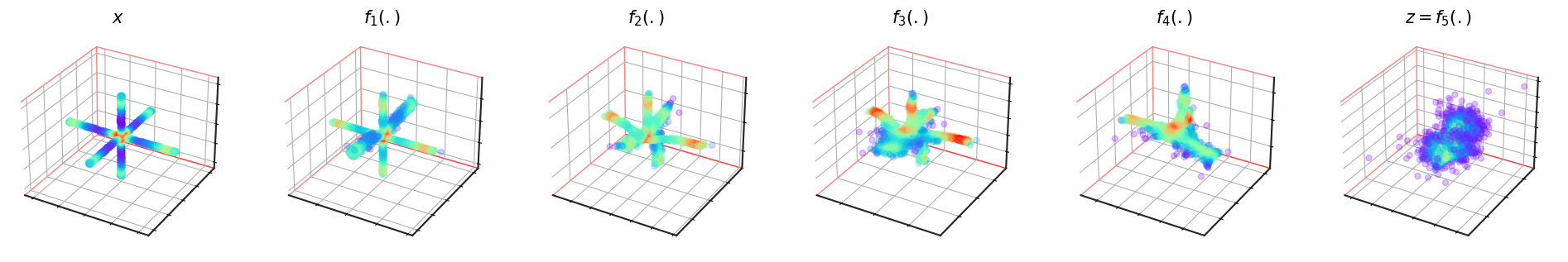}\\
    \includegraphics[width=\linewidth]{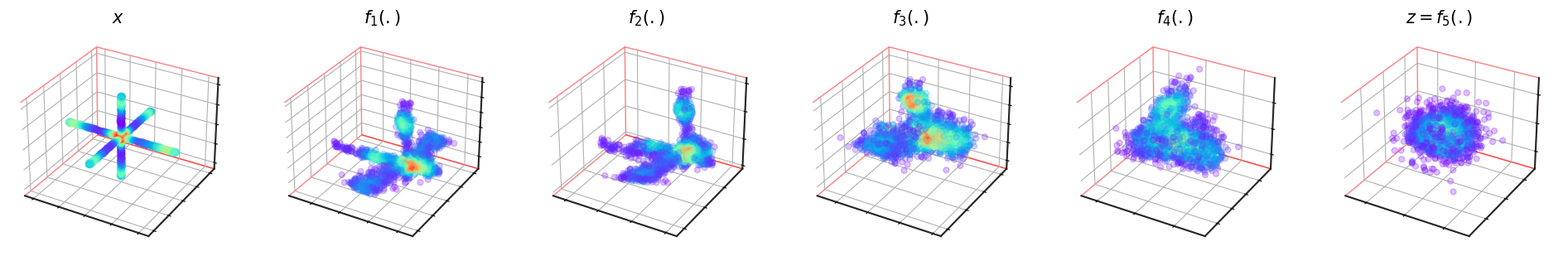}
    \caption{Visualization of the latent transformation achieved using RealNVP (Top Row) and VQ-RealNVP (Bottom Row) on the Star data distribution.}
\end{figure*}

\begin{figure*}[h!]
    \centering
    \includegraphics[width=\linewidth]{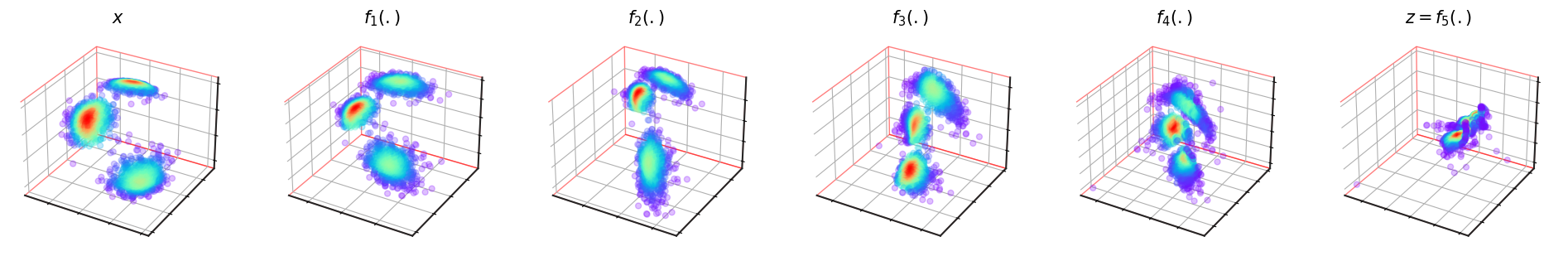}\\
    \includegraphics[width=\linewidth]{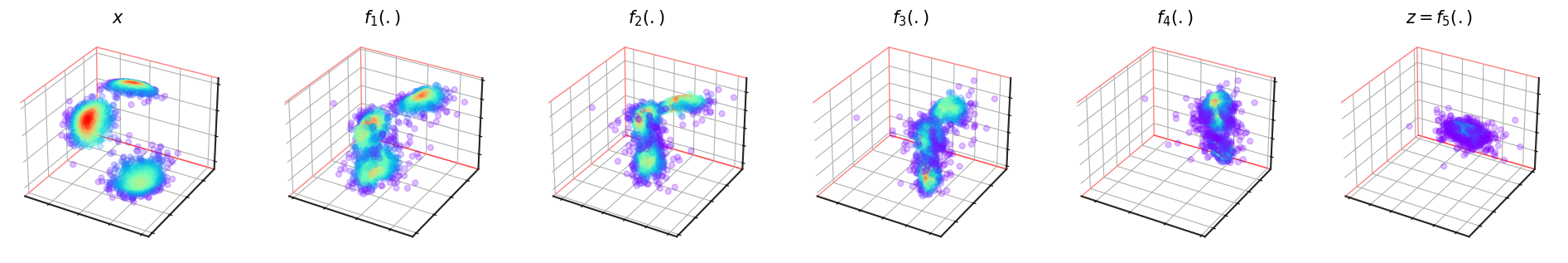}
    \caption{Visualization of the latent transformation achieved using RealNVP (Top Row) and VQ-RealNVP (Bottom Row) on the Spherical data distribution.}
     \label{gauss-end}
\end{figure*}

\begin{table*}[h!]
\centering
\begin{tabular}{@{}cccccc@{}}
\toprule
\textbf{Model} & Non-Knotted   & Bent-Lissajous  & Disjoint-Circles  & Star \\ \midrule
Real NVP       & 0.53 $\pm$ 0.18  & 1.04 $\pm$ 0.22   &  1.71 $\pm$ 0.12     & 3.33 $\pm$ 0.18            \\
VQ-RealNVP     & 2.39 $\pm$ 0.24  & 2.62 $\pm$ 0.13   &  2.71 $\pm$ 0.19     & 4.23 $\pm$ 0.06             \\ \midrule
MAF            & 0.73 $\pm$ 0.18  & 1.48 $\pm$ 0.11   &  1.95 $\pm$ 0.12     & 3.53 $\pm$ 0.03             \\
VQ-MAF         & 2.41 $\pm$ 0.19  & 2.06 $\pm$ 0.12   &  2.87 $\pm$ 0.07     & 3.59 $\pm$ 0.12             \\ \midrule
CEF            & -0.46 $\pm$ 0.13 & -0.51 $\pm$ 0.16  & -0.71 $\pm$ 0.21     & 1.26 $\pm$ 0.11             \\
VQ-CEF         & -0.15 $\pm$ 0.09 & -0.54 $\pm$ 0.22  & 0.24 $\pm$ 0.15      & 1.32 $\pm$ 0.02             \\ \bottomrule
\end{tabular}%
\caption{ Quantitative performance evaluation for \textbf{Density Estimation} in terms of the test log-likelihood in nats (higher the better) on the toy 3D Datasets. The values are averaged across 5 independent trials, $\pm$ represents the 95\% confidence interval.}
\label{density-est-quant}
\end{table*}

\begin{table*}[t!]
\centering
\begin{tabular}{@{}cccccl@{}}
\toprule
\textbf{Model} & Non-Knotted   & Bent-Lissajous & Disjoint-Circles  & Star \\ \midrule
Real NVP       & 0.53  $\pm$  0.18  & 1.04  $\pm$  0.22     &  1.71  $\pm$  0.12     & 3.33  $\pm$  0.18            \\
VQ-RealNVP     & 2.39  $\pm$  0.24  & 2.62  $\pm$  0.13     &  2.71  $\pm$  0.19     & 4.23  $\pm$  0.06             \\ \midrule
MAF            & 0.73  $\pm$  0.18  & 1.48  $\pm$  0.11     &  1.95  $\pm$  0.12     & 3.53  $\pm$  0.03             \\
VQ-MAF         & 2.41  $\pm$  0.19  & 2.06  $\pm$  0.12     &  2.87  $\pm$  0.07     & 3.59  $\pm$  0.12             \\ \midrule
CEF            & -0.46  $\pm$  0.13 & -0.51  $\pm$  0.16    & -0.71  $\pm$  0.21     & 1.26  $\pm$  0.11             \\
VQ-CEF         & -0.15  $\pm$  0.09 & -0.54  $\pm$  0.22    & 0.24  $\pm$  0.15      & 1.32  $\pm$  0.02             \\ \bottomrule
\end{tabular}
\caption{ Quantitative performance evaluation for \textbf{Sample Generation} in terms of the log-likelihood in nats (higher the better) on the toy 3D Datasets. The values are averaged across 5 independent trials, $\pm$ represents the 95\% confidence interval.}
\label{sample-gen-quantitative}
\end{table*}

\begin{figure*}[h!]
    \centering
    \begin{tabular}{cc}     \subfloat[InterlockedCircles]{\includegraphics[width = 0.2\linewidth]{figures/datasets/realdata_InterlockedCIRCLES.png}}
    \subfloat[Bent-Lissajous]{\includegraphics[width = 0.2\linewidth]{figures/datasets/realdata_BentLISSAJOUS.png}}
    \subfloat[Non-Knotted]{\includegraphics[width = 0.2\linewidth]{figures/datasets/realdata_NonKNOTTED.png}}
    \subfloat[Disjoint-Circles]{\includegraphics[width = 0.2\linewidth]{figures/datasets/realdata_DisjointCIRCLES.png}}
    \subfloat[Star]{\includegraphics[width = 0.2\linewidth]{figures/datasets/realdata_STAR.png}}
    \end{tabular}\\
    \includegraphics[width = 0.195\linewidth]{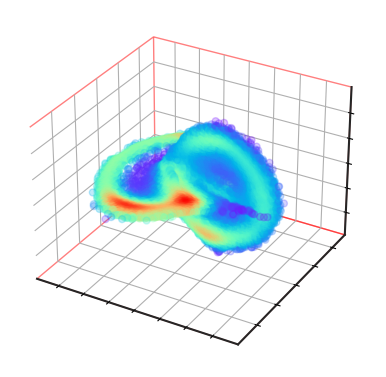}
    \includegraphics[width = 0.195\linewidth]{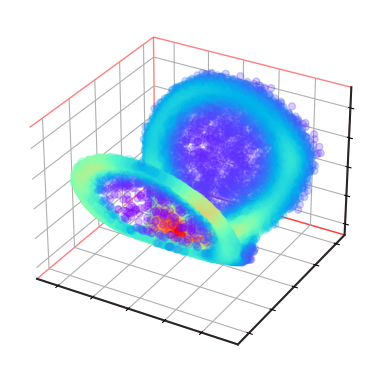}
    \includegraphics[width = 0.195\linewidth]{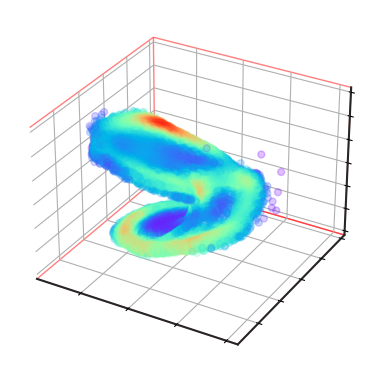}
    \includegraphics[width = 0.195\linewidth]{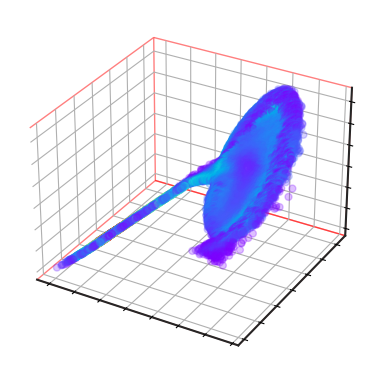}
    \includegraphics[width = 0.195\linewidth]{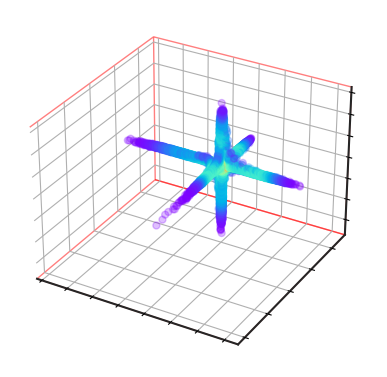} \\ 
    \includegraphics[width = 0.195\linewidth]{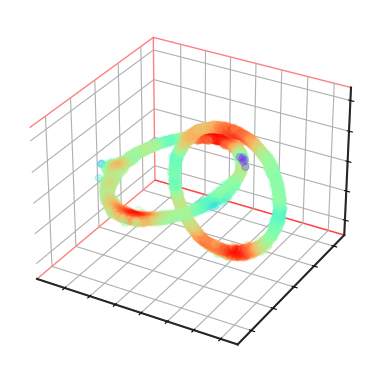} 
    \includegraphics[width = 0.195\linewidth]{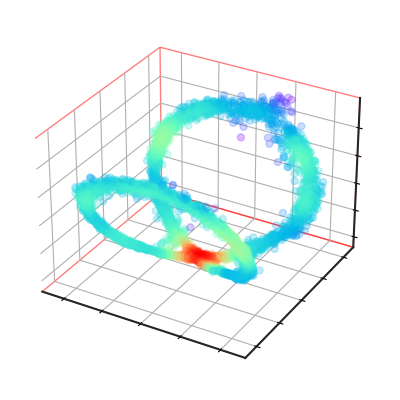}
    \includegraphics[width = 0.195\linewidth]{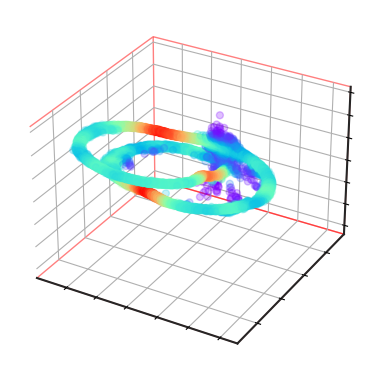}
    \includegraphics[width = 0.195\linewidth]{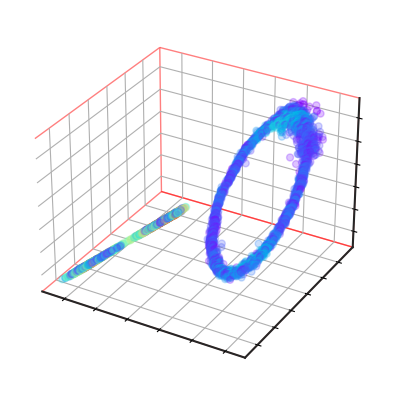}
    \includegraphics[width = 0.195\linewidth]{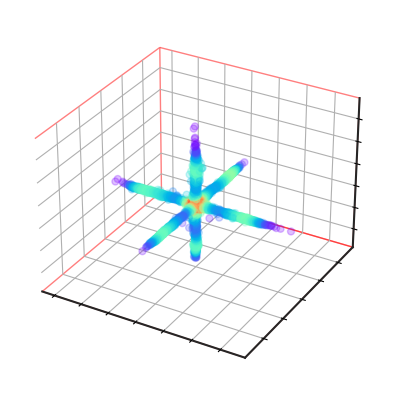}
    \caption{Qualitative visualization of the samples generated by a classical flow - \textbf{RealNVP} (Middle Row) and its VQ-counterpart (Bottom Row) trained on Toy 3D data distributions (Top Row).}
    \label{qual-sample-begin}
\end{figure*}

\begin{figure*}[h!]
    \centering
    \begin{tabular}{ccccc}
    \subfloat[Spherical]{\includegraphics[width = 0.2\linewidth]{figures/datasets/realdata_SPHERICAL.png}}
    \subfloat[Helix]{\includegraphics[width = 0.2\linewidth]{figures/datasets/realdata_HELIX.png}}
    \subfloat[Lissajous]{\includegraphics[width = 0.2\linewidth]{figures/datasets/realdata_LISSAJOUS.png}}
    \subfloat[Twisted-Eight]{\includegraphics[width = 0.2\linewidth]{figures/datasets/realdata_TwistedEIGHT.png}}
    \subfloat[Knotted]{\includegraphics[width = 0.2\linewidth]{figures/datasets/realdata_KNOTTED.png}}
    \end{tabular}\\
    \includegraphics[width = 0.195\linewidth]{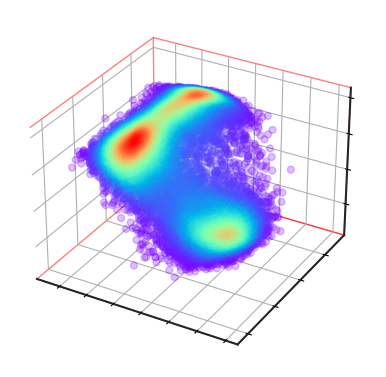}
    \includegraphics[width = 0.195\linewidth]{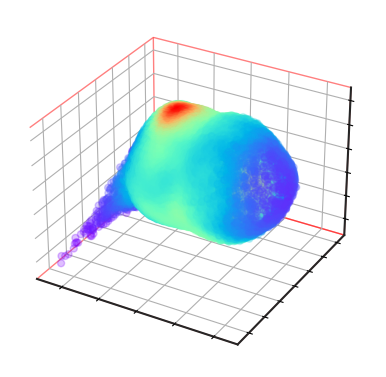}
    \includegraphics[width = 0.195\linewidth]{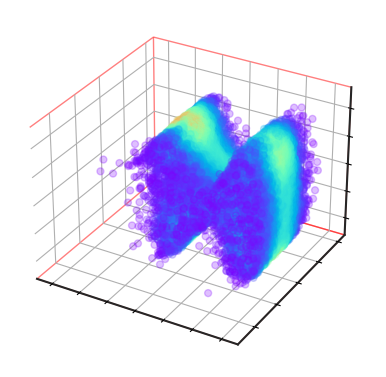}
    \includegraphics[width = 0.195\linewidth]{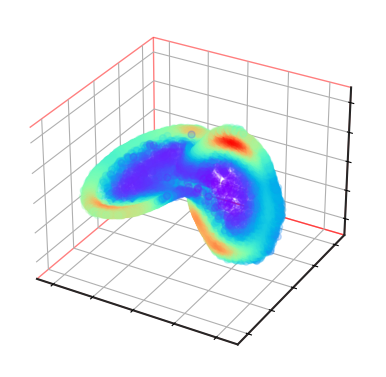}
    \includegraphics[width = 0.195\linewidth]{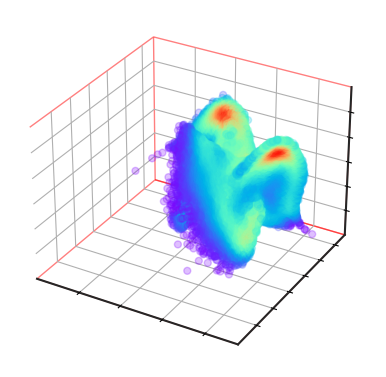}\\
    \includegraphics[width = 0.195\linewidth]{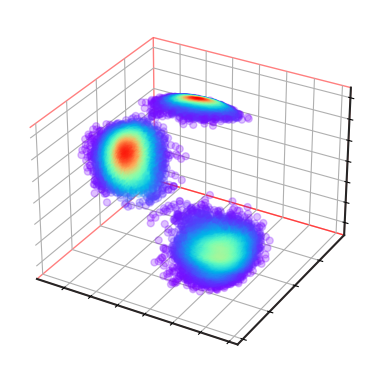}
    \includegraphics[width = 0.195\linewidth]{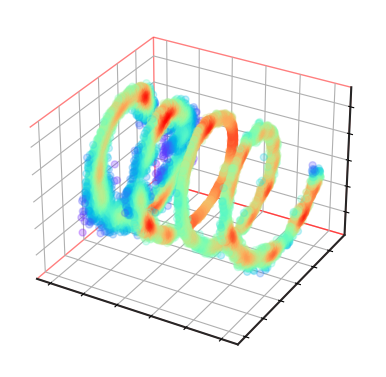}
    \includegraphics[width = 0.195\linewidth]{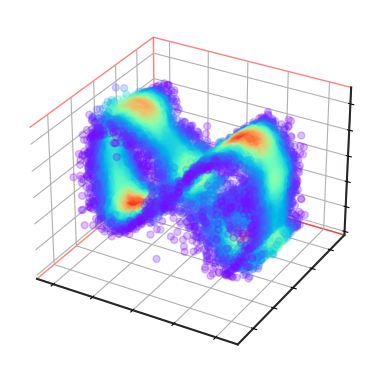}
    \includegraphics[width = 0.195\linewidth]{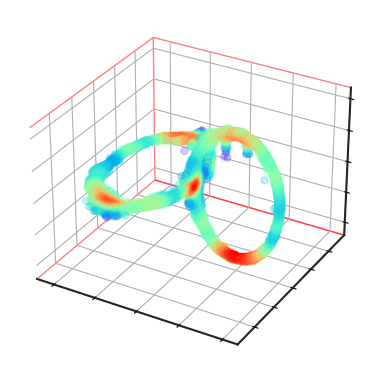}
    \includegraphics[width = 0.195\linewidth]{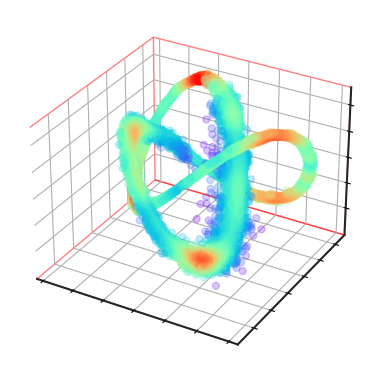}
    \caption{Qualitative visualization of the samples generated by a classical flow - \textbf{MAF} (Middle Row) and its VQ-counterpart (Bottom Row) trained on Toy 3D data distributions (Top Row).}
\end{figure*}

\begin{figure*}[h!]
    \centering
    \begin{tabular}{cc}     \subfloat[InterlockedCircles]{\includegraphics[width = 0.2\linewidth]{figures/datasets/realdata_InterlockedCIRCLES.png}}
    \subfloat[Bent-Lissajous]{\includegraphics[width = 0.2\linewidth]{figures/datasets/realdata_BentLISSAJOUS.png}}
    \subfloat[Non-Knotted]{\includegraphics[width = 0.2\linewidth]{figures/datasets/realdata_NonKNOTTED.png}}
    \subfloat[Disjoint-Circles]{\includegraphics[width = 0.2\linewidth]{figures/datasets/realdata_DisjointCIRCLES.png}}
    \subfloat[Star]{\includegraphics[width = 0.2\linewidth]{figures/datasets/realdata_STAR.png}}
    \end{tabular}\\
    \includegraphics[width = 0.195\linewidth]{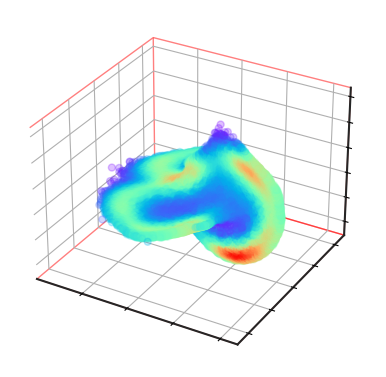}
    \includegraphics[width = 0.195\linewidth]{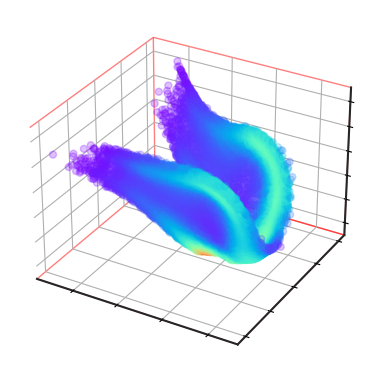}
    \includegraphics[width = 0.195\linewidth]{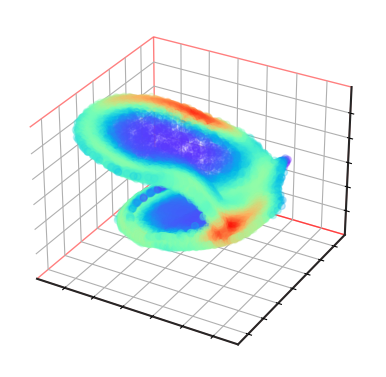}
    \includegraphics[width = 0.195\linewidth]{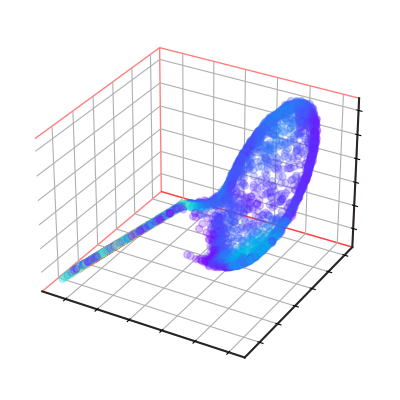}
    \includegraphics[width = 0.195\linewidth]{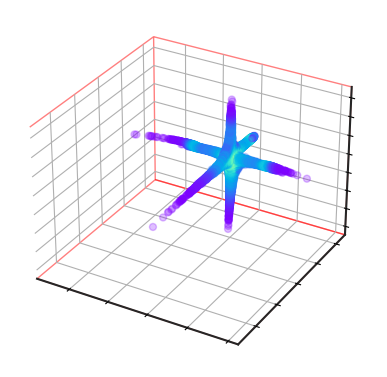} \\ 
    \includegraphics[width = 0.195\linewidth]{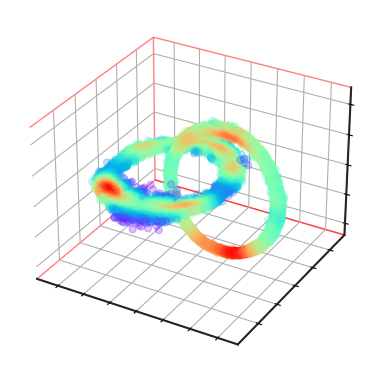} 
    \includegraphics[width = 0.195\linewidth]{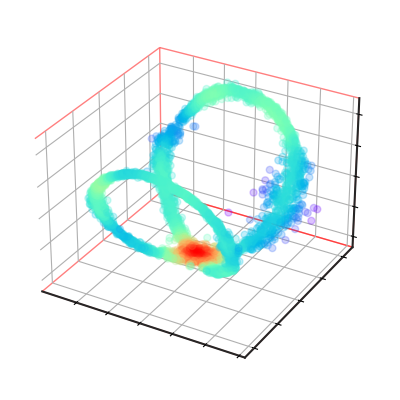}
    \includegraphics[width = 0.195\linewidth]{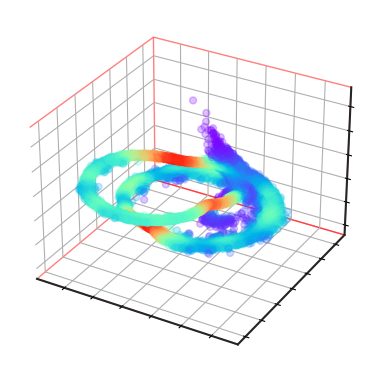}
    \includegraphics[width = 0.195\linewidth]{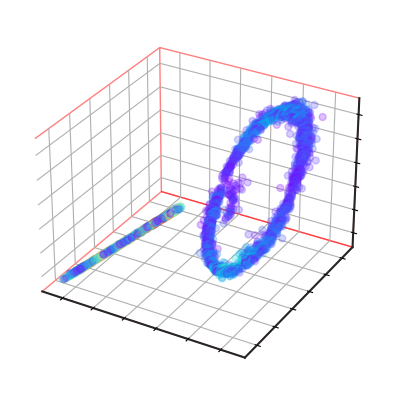}
    \includegraphics[width = 0.195\linewidth]{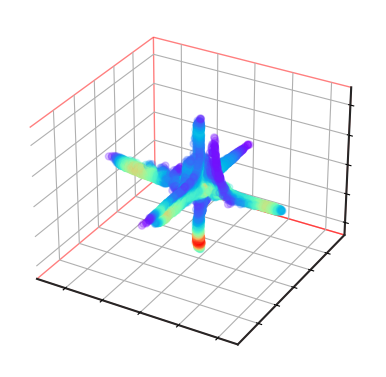}
    \caption{Qualitative visualization of the samples generated by a classical flow - \textbf{MAF} (Middle Row) and its VQ-counterpart (Bottom Row) trained on Toy 3D data distributions (Top Row).}
\end{figure*}

%CEF Results

\begin{figure*}[h!]
    \centering
    \begin{tabular}{ccccc}
    \subfloat[Spherical]{\includegraphics[width = 0.2\linewidth]{figures/datasets/realdata_SPHERICAL.png}}
    \subfloat[Helix]{\includegraphics[width = 0.2\linewidth]{figures/datasets/realdata_HELIX.png}}
    \subfloat[Lissajous]{\includegraphics[width = 0.2\linewidth]{figures/datasets/realdata_LISSAJOUS.png}}
    \subfloat[Twisted-Eight]{\includegraphics[width = 0.2\linewidth]{figures/datasets/realdata_TwistedEIGHT.png}}
    \subfloat[Knotted]{\includegraphics[width = 0.2\linewidth]{figures/datasets/realdata_KNOTTED.png}}
    \end{tabular}\\
    \includegraphics[width = 0.195\linewidth]{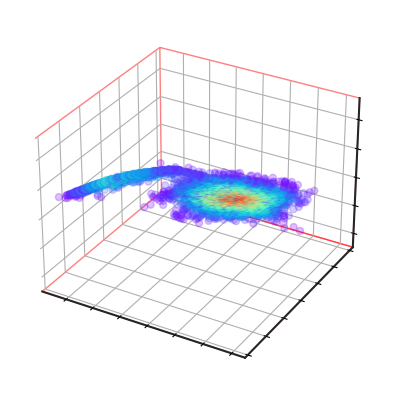}
    \includegraphics[width = 0.195\linewidth]{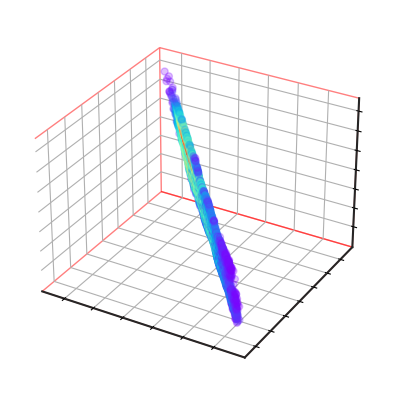}
    \includegraphics[width = 0.195\linewidth]{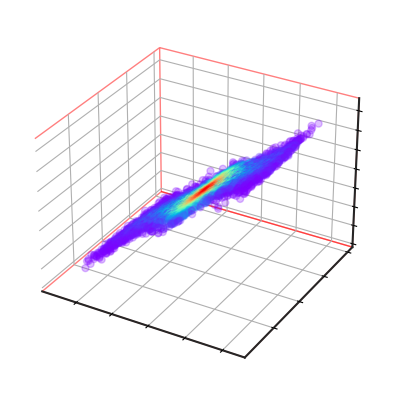}
    \includegraphics[width = 0.195\linewidth]{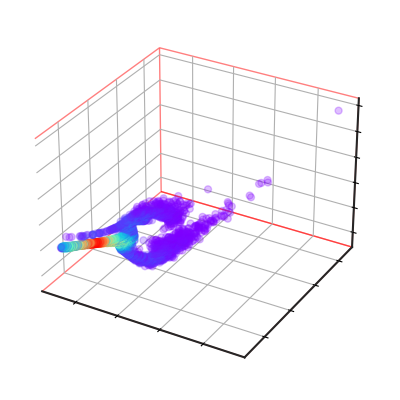}
    \includegraphics[width = 0.195\linewidth]{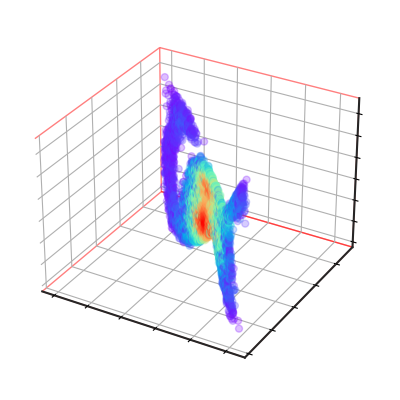}\\
    \includegraphics[width = 0.195\linewidth]{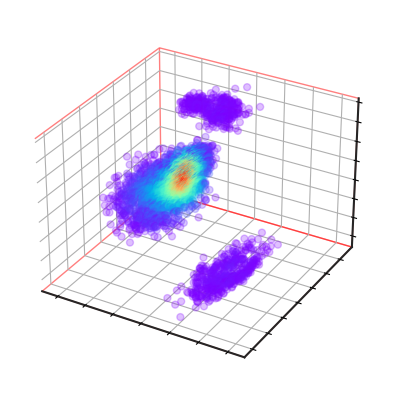}
    \includegraphics[width = 0.195\linewidth]{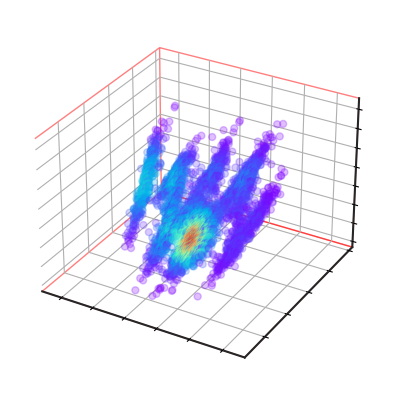}
    \includegraphics[width = 0.195\linewidth]{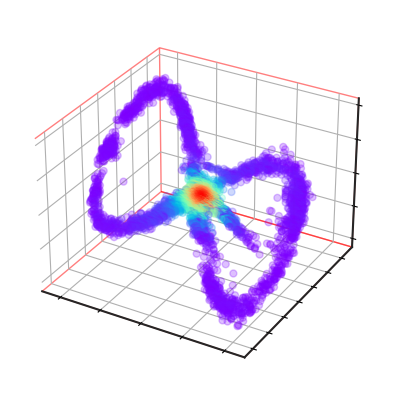}
    \includegraphics[width = 0.195\linewidth]{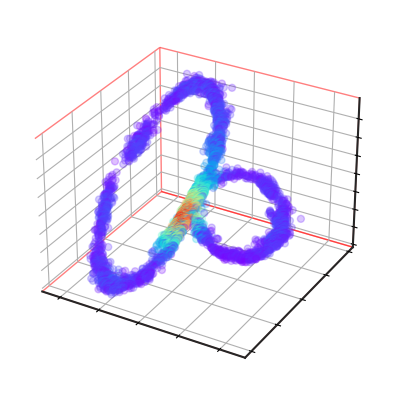}
    \includegraphics[width = 0.195\linewidth]{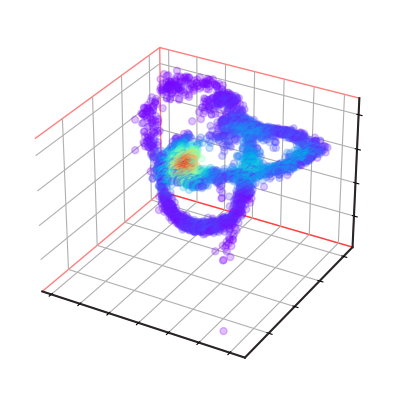}
    \caption{Qualitative visualization of the samples generated by \textbf{CEF} (Middle Row) and its VQ-counterpart (Bottom Row) trained on Toy 3D data distributions (Top Row). CEF consists of a 2-dimensional RealNVP flow post composed with a conformal embedding that raises it to 3D. }
\end{figure*}

\begin{figure*}[h!]
    \centering
    \begin{tabular}{cc}     \subfloat[InterlockedCircles]{\includegraphics[width = 0.2\linewidth]{figures/datasets/realdata_InterlockedCIRCLES.png}}
    \subfloat[Bent-Lissajous]{\includegraphics[width = 0.2\linewidth]{figures/datasets/realdata_BentLISSAJOUS.png}}
    \subfloat[Non-Knotted]{\includegraphics[width = 0.2\linewidth]{figures/datasets/realdata_NonKNOTTED.png}}
    \subfloat[Disjoint-Circles]{\includegraphics[width = 0.2\linewidth]{figures/datasets/realdata_DisjointCIRCLES.png}}
    \subfloat[Star]{\includegraphics[width = 0.2\linewidth]{figures/datasets/realdata_STAR.png}}
    \end{tabular}\\
    \includegraphics[width = 0.195\linewidth]{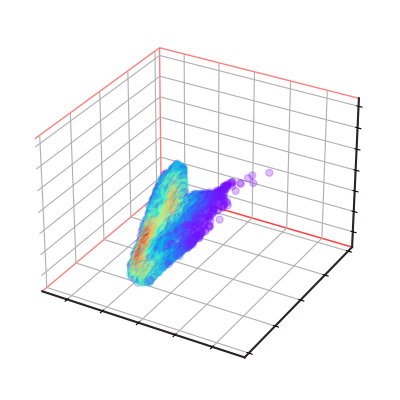}
    \includegraphics[width = 0.195\linewidth]{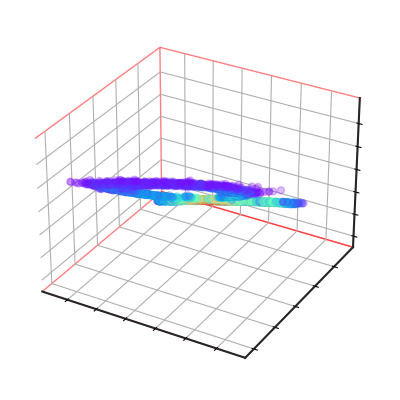}
    \includegraphics[width = 0.195\linewidth]{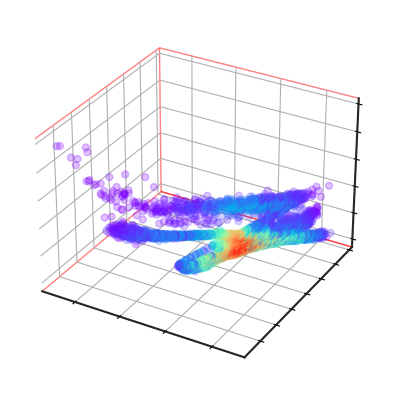}
    \includegraphics[width = 0.195\linewidth]{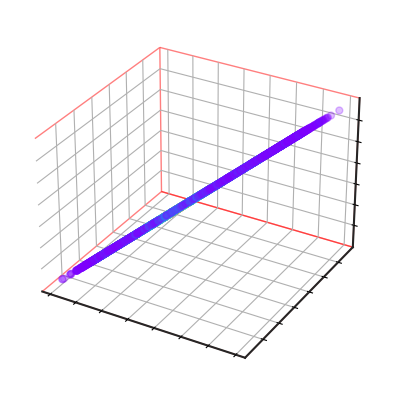}
    \includegraphics[width = 0.195\linewidth]{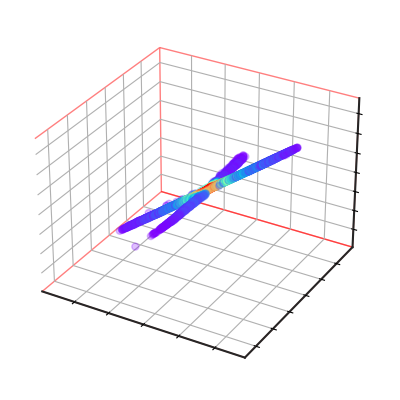} \\ 
    \includegraphics[width = 0.195\linewidth]{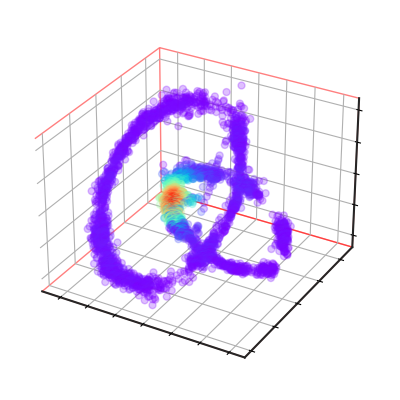} 
    \includegraphics[width = 0.195\linewidth]{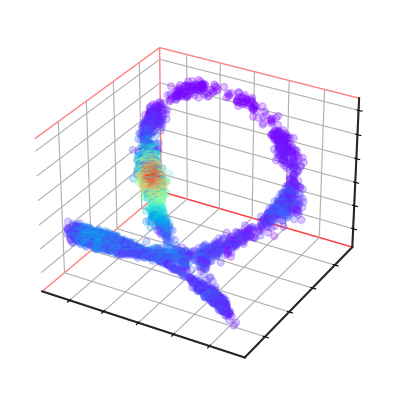}
    \includegraphics[width = 0.195\linewidth]{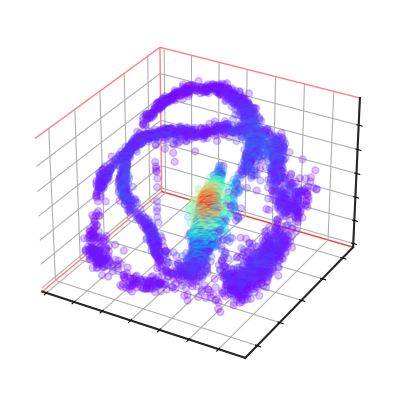}
    \includegraphics[width = 0.195\linewidth]{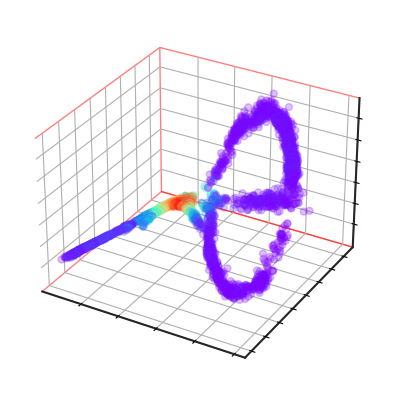}
    \includegraphics[width = 0.195\linewidth]{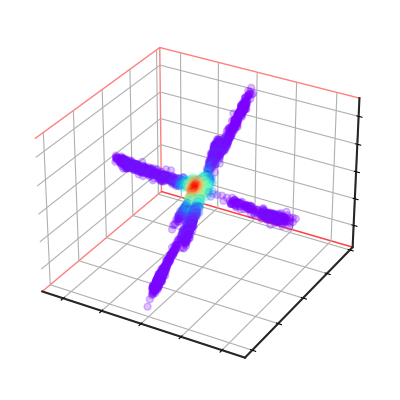}
    \caption{Qualitative visualization of the samples generated by  \textbf{CEF} (Middle Row) and its VQ-counterpart (Bottom Row) trained on Toy 3D data distributions (Top Row). CEF consists of a 2-dimensional RealNVP flow post composed with a conformal embedding that raises it to 3D. }
    \label{qual-sample-end}
\end{figure*}

\bibliographystyle{unsrt}  
\bibliography{references}